\theoremstyle{plain}
\newtheorem*{theorem*}{Theorem}
\newcommand{\cora}{C{\smaller ORA}}
\newcommand{\citeseer}{C{\smaller ITE}S{\smaller EER}}
\newcommand{\blogcatalog}{B{\smaller LOG}}
\newcommand{\wiki}{W{\smaller IKI}}
\newcommand{\flickr}{F{\smaller LICKR}}
\newcommand{\pubmed}{P{\smaller UB}M{\smaller ED}}
\begin{document}

\title{DINE: Dimensional Interpretability of Node Embeddings}

\author{Simone~Piaggesi,
        Megha~Khosla,
        Andr\'{e}~Panisson,
        and~Avishek~Anand
\IEEEcompsocitemizethanks{
\IEEEcompsocthanksitem S. Piaggesi is with University of Pisa, Italy.
\protect\\
Email: simone.piaggesi@di.unipi.it
\IEEEcompsocthanksitem A. Panisson is with CENTAI Institute, Italy.
\IEEEcompsocthanksitem M. Khosla and A. Anand are with TU Delft, Netherlands.
}
}

\maketitle

\begin{abstract}
Graphs are ubiquitous due to their flexibility
in representing social and technological systems as networks of interacting elements. 
Graph representation learning methods, such as 
node embeddings, are powerful approaches to map nodes 
into a latent vector space, allowing their use for various graph
tasks. Despite their success, only 
few studies have focused on explaining node embeddings locally. Moreover, global explanations of node embeddings remain unexplored, limiting interpretability and debugging potentials.
We address this gap by developing human-understandable explanations for dimensions in node embeddings.
Towards that, we first develop new metrics that measure the global interpretability of embedding vectors based on the marginal contribution of the embedding dimensions to predicting graph structure. 
We say that an embedding dimension is more interpretable if it can faithfully map to an understandable sub-structure in the input graph - like community structure. 
Having observed that standard node embeddings have low interpretability, we then introduce \textsc{Dine} (Dimension-based Interpretable Node Embedding), a novel approach that can retrofit existing node embeddings by making them more interpretable without sacrificing their task performance.  
We conduct extensive experiments on synthetic and real-world graphs and show that we can simultaneously learn highly interpretable node embeddings with effective performance in link prediction.
\end{abstract}


\section{Introduction}
\label{sec:intro}

Node embeddings are general purpose low-dimensional, continuous vertex representations in dense vector spaces. 
These embeddings are typically learned by trying to optimize a user-defined or flexible notion of structural similarity between vertices. 
Node embeddings have proven to be mature and popular techniques with widespread applications in web and social network analysis tasks like link prediction and community detection to name a few~\cite{hamilton2020graph} due to their simplicity, and expressive power.
However, one of their shortcomings is the innate lack of interpretability of the latent vector spaces they exist in. 
Specifically, each of the learned latent dimensions does not have a corresponding realizable interpretation in the input graphs~\cite{ liu2018interpretation, dalmia_towards_2018, gogoglou_interpretability_2019}.
This paper aims to fill this critical gap by proposing a method to retrofit ``already learned'' non-interpretable embeddings into a new and interpretable vector space without compromising the task performance.

The meaning of individual latent embedding dimensions is hard to define and determine~\cite{liu2018interpretation, dalmia_towards_2018, gogoglou_interpretability_2019}. 
We operate on a general, yet powerful notion of grounding the interpretation of each dimension to understandable sub-structures of the input graphs, e.g., communities, subgraphs, etc.
This design decision has clear advantages in downstream tasks where sub-graphs or communities are clear explanations of a predictive task.
As a concrete example, in a link-prediction task the likelihood of a link is higher for a pair of nodes in the same community \cite{cannistraci2013link}. 
Similarly, in several bio-medicals tasks that use embedding features like~\cite{embeddings_bio,dong2022message}, subgraphs refer to a protein or genetic pathways.
Therefore, automatically grounding latent dimensions to sub-graphs and community structures will improve understanding of the prediction process.

Existing literature investigating the interpretability of node embeddings is limited to three major aspects. 
First, posthoc feature-attribution methods like~\cite{ying2019gnnexplainer,funke2022:zorro,vu2020pgm} explain a decision in terms of (a) subset of node features or (b) nodes/edges in the computational graph. If embeddings are used as features, subsets of latent features are still non-interpretable. 
Explaining a prediction in terms of edges and nodes is a useful first step, but these approaches cannot be used globally. Specifically, the meaning of a latent dimension can still not be explained using these local methods.
Secondly, in the presence of ground-truth node labels \cite{gogoglou_interpretability_2019, duong2019interpretable} measure the interpretability of embedding dimensions in terms of their association strength with these labels without providing an explicit explanation like our approach.
These approaches pre-suppose a certain interpretable mapping and are not flexible.
Finally, \cite{idahl2020finding} search for interpretable subspaces related to concepts from knowledge bases. Unlike earlier works, we operate on a more generalized setting extracting explanations that are agnostic to ground truth labels. Additionally, and more importantly, we propose methods that retrofit existing node embeddings to make their dimensions more interpretable.

In this work, the central aim is finding the human-interpretable meaning of node embedding dimensions and associating latent directions with understandable structural features of the input graph. Despite this \textit{post-hoc} approach, we cannot rely on existing tools for interpreting prediction models \cite{ribeiro2016should, lundberg2017unified}, mainly because they are designed for supervised tasks. Instead, we analyze unsupervised node embeddings to find evidence for interpretable latent units associated with individual semantic concepts of the input data. 
Since many empirical graphs are characterized by an underlying community structure~\cite{girvan2002community}, we associate communities as interpretable semantic concepts of the input data. 
We first develop a metric for interpreting each dimension of the node embeddings in terms of its utility in predicting  edges in the graph.

\begin{figure*}[h]
     \centering

     \begin{subfigure}{\linewidth}
         \centering
         \includegraphics[width=\textwidth]{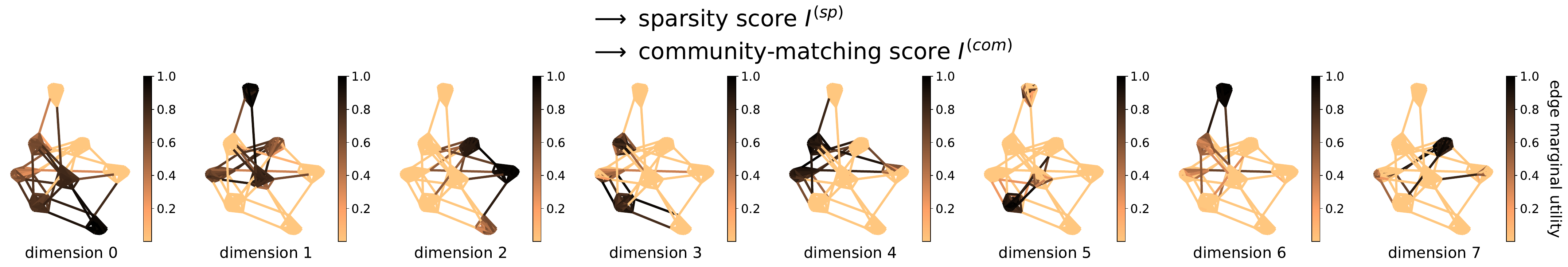}
         \caption{\textsc{DeepWalk}}
     \end{subfigure}
     \begin{subfigure}{\linewidth}
         \centering
         \includegraphics[width=\textwidth]{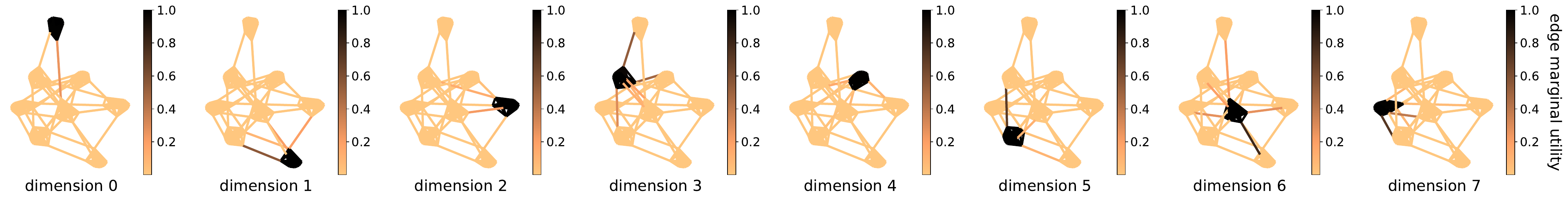}
         \caption{\textsc{Dine}}
     \end{subfigure}
    \caption{Saliency plots with edge marginal utilities (normalized between 0 and 1) for 8-dimensional embeddings trained on a graph generated via stochastic block model~\cite{holland1983stochastic}, with 80 nodes divided into 8 cliques.
    In picture (a), from the left to the right, \textsc{DeepWalk} dimensions are sorted by increasing their matching level with individual cliques, and with growing sparsity of the per-dimension subgraphs. 
    In picture (b) our approach \textsc{Dine} is able to obtain interpretable dimensions both in terms of community-matching and sparsity.}
    \label{fig:heatmaps}
\end{figure*}
Our utility measure -$\mu_d(\mathbf{u},\mathbf{v})$- is based on feature removal methods and expresses the individual contribution of dimension $d$ for predicting 
edge $(u,v)$ from embeddings. 
Using this measure, we can construct saliency maps (refer to Figure~\ref{fig:heatmaps}), to recognize groups of edges (salient subgraphs) that are reconstructed by specific dimensions. We then define quantitative metrics to estimate the interpretability of latent dimensions:  we assess interpretability as a ``degree of association'' with individual graph communities and the sparsity level for these associations. 
As an example, in Figure~\ref{fig:heatmaps}(a), we sort \textsc{DeepWalk} dimensions according to our metrics of interpretability, showing that
the majority of units are not immediately interpretable since they do not match with a single clique of the synthetic graph.

Secondly, we propose a novel and modular approach, called \textsc{Dine}, to post-process existing node representations and enhance their interpretability. 
DINE embeds input embeddings into a new sparse, interpretable, and low-entropy vector space. 
Figure~\ref{fig:heatmaps}(b) shows the result of this post-processing step on \textsc{DeepWalk} vectors. DINE also preserves the topological graph information to be employed in usual downstream tasks with minimal performance loss. 

In our extensive experimental evaluations, we compare our approach with other embedding methods in terms of interpretability, link prediction performance, and scalability over multiple real-world graph datasets. Our results show that DINE convincingly, under most experimental conditions, outperforms existing baselines in terms of dimensional interpretability with negligible to no performance losses.
To summarize,  our main contributions are as follows:

\begin{itemize}
  \item We formalize the desirable properties for global explanations of node embeddings, namely \textit{decomposability, comprehensibility, and sparsity}.
  
    \item 
    We introduce a new utility measure that allows the extraction of explanatory subgraphs, one for each dimension ( addressing the property of Decomposability). Our measure is  grounded in feature attribution techniques like the Shapley value, which are widely used for removal-based explanations. Moreover, we propose two metrics to measure Comprehensibility and Sparsity of explanatory subgraphs.

    \item 
    We propose a novel, modular, and theoretically sound method DINE that intends to retrofit existing node embeddings to improve their global interpretability.

    \item 
    We run extensive experimental analyses to establish clear gains in the interpretability-performance trade-offs using DINE.

\end{itemize}

Our code and artifacts will be released 
at \url{https://www.github.com/simonepiaggesi/dine}.

\section{Preliminaries and Related Work}

\subsection{Preliminaries and Notation}
Given an undirected, unattributed and unweighted graph $\mathcal{G} = (\mathcal{V}, \mathcal{E})$, node embeddings  are the output of an encoder function 
\begin{equation}
e: v \in \mathcal{V} \mapsto \mathbf{e}(v) = \mathbf{v} \in \mathbb{R}^D
\end{equation}
which map nodes into geometric points of the $D-$dimensional vector space $\mathbb{R}^D$ (usually $D << |\mathcal{V}|$). We will refer to both $\mathbf{e}(v)$ and $\mathbf{v}$ as the embedding vectors of node $v \in \mathcal{V}$ mapped through the encoder $e$, and with  $e_d(v)$ and $\mathrm{v}_d$ as entries of these vectors corresponding to dimension $d$. Usually embedding vectors are collected into the columns of the embedding matrix $\mathbf{X} \in \mathbb{R}^{D\times |\mathcal{V}|}$, i.e. $X_{d,v} = \mathrm{e}_d(v) = \mathrm{v}_d$.
Later we will refer to $D$ as the cardinality of the set $\mathcal{D}=\{1,\dots, D \}$ containing the enumerated dimensions.

In the case of \textsc{DeepWalk} \cite{perozzi2014deepwalk} and \textsc{Node2Vec} \cite{grover2016node2vec}, the encoder is a lookup function where node representations are learned 
through the optimization of a neighborhood reconstruction loss. Specifically, the  output of a decoder $\mathrm{DEC}:\mathbb{R}^D \times \mathbb{R}^D \rightarrow \mathbb{R}$ is optimized to predict node pairs $(u,v) \in \mathcal{T} \subseteq \mathcal{V \times \mathcal{V}}$ generated from co-occurrences in unbiased or biased random walks.
Many other embedding methods fit this \textit{encoder-decoder} framework \cite{hamilton2017representation}: for example, factorization-based embeddings \cite{ou2016asymmetric, cao2015grarep} and even deep neural networks methods like \cite{kipf2016variational}, where the encoder function is given by a graph convolutional network \cite{zhang2019graph}.

\subsection{Related Work}

\textbf{Interpretability for node embeddings.} From the node embeddings perspective, interpretability is a multi-faceted concept that has been studied from different angles.
In \cite{bonner_exploring_2019, dalmia_towards_2018} authors investigate, using prediction tasks, whether specific topological graph features (e.g. degree centrality, clustering coefficient, etc.) are encoded into node representations. These works
significantly differ from our approach, where the aim is to find the comprehensible meaning of embedding dimensions, associating single dimensions with interpretable graph structures (e.g. communities).
Other methods focus on measuring interpretability of node embeddings with respect to node labels \cite{gogoglou_interpretability_2019} and node centralities \cite{khoshraftar_centrality-based_2021}. In~\cite{liu2018interpretation} global interpretations are given as a hierarchy of graph partitions, but they do not focus on interpreting single dimensions.
In \cite{wang2019discerning} the authors study the impact on learned node embeddings when removing edges from the input graph. Instead, \cite{park2022providing} estimates the importance of candidate nodes in each node representation. Another line of research focuses on producing interpretable-by-design representations based on graph clustering \cite{duong2023deep, rozemberczki_gemsec_2019}, which are conceptually analogous to community-preserving node embeddings \cite{wang_community_2017}.

\textbf{Interpretability for link prediction.} Our approach focuses on the interpretation of embedding dimensions according to the graph structural reconstruction task, and it is related to methods for the interpretability of embedding-based link prediction. For instance, ExplaiNE \cite{kang2019explaine} quantifies the variation in the probability of a link when adding or removing neighboring edges. PaGE-Link \cite{zhang2023page} generates explanations as paths connecting a node pair, while ConPI
\cite{wang2021modeling} provides the most influential interactions computed with an attention mechanism over the contextual neighborhoods. Other relevant methods study the problem of explaining link prediction in knowledge graphs~\cite{rossi2022explaining, zhang2019interaction}. We, on the other hand, aim to explain the node embedding itself by associating explanations with each of its dimensions. 

\textbf{Interpretability for word embeddings.} 
Since many methods for graph representation learning are based on language models, techniques for interpreting the dimensions of word embeddings are also relevant for node embeddings.
Previous literature in this area focuses on interpreting the dimensions of word embeddings based on semantic information \cite{prouteau2022embedding, senel_semantic_2018}, or analyzing geometric properties of the embedding space \cite{shin2018interpreting, park_rotated_2017}. Several other studies also propose approaches to learn interpretable representations by design, where the goal is
achieving \textit{sparsity}~\cite{liang_anchor_2021, sun2016sparse, luo_online_2015}. However, due to the high popularity of some embedding approaches, post-processing techniques that are built upon these approaches have been often preferred rather than interpretable-by-design methods \cite{subramanian_spine_2018, chen_kate_2017, faruqui_sparse_2015}.

\section{Explaining Node Embeddings}
\label{sec:utility}

We start by formalizing the desired  fundamental properties of a global explanation for node embeddings.
 Intuitively, as graph structure serves as the input for generating unsupervised node embeddings, we seek reliable explanations in terms of associations between model parameters and human-understandable units of the input graph.

\begin{description}
   \item[Decomposability]   A global explanation should be able to refer to single parts of the model, and then explain these parts individually \cite{belle2021principles}. This is different from local instance-based explanation, where the focus is to interpret the result on single node predictions. In particular, a global explanation for node embeddings should be able to explain separately each dimension of the embedding space. To do so, in this work we extract interpretations in the form of important subgraphs $\mathcal{G}_d$ that we identify as the ``meaning'', or ``explanation'', of a dimension $d$.  

   \item[Comprehensibility] An explanation should be human-understandable, in the sense that it relates to meaningful graph features \cite{ dalmia_towards_2018} or discernible concepts \cite{idahl2020finding}. With subgraph-based explanations, such features can be seen as structural components that we identify with the \textit{communities} of the graph. For instance in biological networks like protein-protein interaction networks, these subgraphs could be important pathways responsible for biological mechanisms associated with for example a protein function or disease progression. In other graphs such as social networks these subgraphs can be seen as communities.  Communities are typically considered as one of the fundamental organizing principles in these graphs \cite{girvan2002community} justifying their choice to identify the meaning of representation dimensions.

   \item[Sparsity] Explanations should be associated only with a minimal set of graph elements that  sufficiently explain the learned parameters, ignoring the irrelevant ones \cite{liang_anchor_2021, sun2016sparse}. In our case, sparsity quantifies the spatial localization of an explanation subgraph
   
\end{description}

Having defined the desired properties, we next describe how to obtain such decomposable explanations for node embeddings. In Section \ref{sec:interpret_scores} we  propose new metrics to quantify both the comprehensibility and the sparsity of these explanations.

\subsection{Decomposable explanations}
\label{sec:dim_subgraphs}

Here we describe how we obtain global and decomposable  explanations of node embeddings by extracting one explanation for every dimension of the latent space. Intuitively, given that embeddings are typically optimized for graph structure prediction, we aim to uncover 
the importance of individual dimensions in reconstructing the sub-structures of the graph. These substructures, consequently, will serve as explanations for individual dimensions. 
To extract the substructure explanations, we develop a 
utility function $\mu_d(\mathbf{u},\mathbf{v})$ which quantifies the dimension's contribution in reconstructing a single graph edge with an embedding decoder. In fact, the score returned by the decoder $\mathrm{DEC}(\mathbf{u},\mathbf{v})$ can be used to perform edge reconstruction, i.e. assessing the existence of edges $(u,v)$: the higher the score, the higher the likelihood of observing the link on the input graph. 

Here we adopt a simple yet effective approach  for attributing dimension importance based on feature removal \cite{datta2016algorithmic, li2016understanding}. 
Specifically, we define the attribution score of a single dimension $d \in \mathcal{D}$ in the reconstruction of an edge $(u,v)$ as:
\begin{equation}
    \label{eq:mu_d}
    \mu_d(\mathbf{u},\mathbf{v}) = \Delta_{\mathcal{D}}(\mathbf{u},\mathbf{v})-\Delta_{\mathcal{D}\setminus\{d\}}(\mathbf{u},\mathbf{v}),
\end{equation}
where $\Delta_\mathcal{S}:\mathbb{R}^{|\mathcal{S}|} \times \mathbb{R}^{|\mathcal{S}|} \rightarrow \mathbb{R}$ quantifies the average edge scoring of dimensions in the subset  $\mathcal{S} \subseteq \mathcal{D}$
\begin{equation}
    \Delta_{\mathcal{S}}(\mathbf{u},\mathbf{v}) = \frac{1}{|\mathcal{S}|}\sum_{d \in \mathcal{S}} \mathrm{u}_d \mathrm{v}_d.
\end{equation}
Notably, we consider a product-based scoring function that is appropriate to work with popular methods such as \textsc{DeepWalk} and \textsc{Node2Vec}. For an individual edge, the function in Eq.~(\ref{eq:mu_d}) measures how much the average likelihood increases or decreases when removing dimension $d$ from the whole set $\mathcal{D}$. From a game-theoretic point of view, the importance scores $\mu_d(\mathbf{u},\mathbf{v})$ defined above is an example of \textit{marginal utility}, which expresses the contribution of dimension $d$ when it is added to the \textit{coalitional set} $\mathcal{S} = \mathcal{D} \setminus \{d\}$. A more exhaustive computation takes into account the average marginal contribution according to any possible coalitional set $\mathcal{S} \subset \mathcal{D}$ and it is given by the Shapley value \cite{shapley201617} score:
\begin{equation}
\label{eq:shapley}
\phi_d(\mathbf{u},\mathbf{v}) = \sum_{\mathcal{S} \subseteq \mathcal{D}\setminus\{d\}} \frac{\binom{|\mathcal{D}|-1}{|\mathcal{S}|}^{-1}}{|\mathcal{D}|}\left[\Delta_{\mathcal{S}\cup \{d\}}(\mathbf{u},\mathbf{v})-\Delta_{\mathcal{S}}(\mathbf{u},\mathbf{v})\right],
\end{equation} 
where the difference $\Delta_{\mathcal{S}\cup \{d\}}(\mathbf{u},\mathbf{v})-\Delta_{\mathcal{S}}(\mathbf{u},\mathbf{v})$ corresponds to the marginal utility of adding $d$ to the dimensions' coalition $\mathcal{S} \subset \mathcal{D}$. Therefore, the importance score $\mu_d(\mathbf{u},\mathbf{v})$ corresponds to the marginal utility given by Eq.~\eqref{eq:shapley} with respect to the maximal coalitions ($|\mathcal{S}| = |\mathcal{D}|-1$).

Since the exact computation of \eqref{eq:shapley} has exponential time complexity, several approximation methods have been proposed in the literature to address scalability issues \cite{vstrumbelj2014explaining, castro2009polynomial}. Additionally, most of the approximations assume independence among features \cite{vstrumbelj2014explaining, lundberg2017unified} and suffer from considering feature correlations \cite{aas2021explaining}.
Rather than introducing an approximation, the marginal utility adopted here helps to derive computationally feasible formulas: in fact, the computation of $\mu_d$ reduces the time complexity from  the order of $2^{|\mathcal{D}|-1} |\mathcal{D}||\mathcal{E}|$ to $|\mathcal{D}||\mathcal{E}|$ when computed over all edges of the graph. Moreover, since mutual independence of features is not usually guaranteed for node embeddings, the simplification is due to express the effect of isolated dimensions disregarding possible feature correlations.

We use the importance scores defined in \eqref{eq:mu_d} to determine the explanation subgraphs formed by the edges that benefit most from the presence of a dimension $d$. 
Specifically, we identify the subgraph $\mathcal{G}_d$ induced by links $\mathcal{E}_d = \{ (u,v) \in \mathcal{E} : \mu_d(\mathbf{u},\mathbf{v})>0 \}$ with \textit{positive} marginal utility as the explanation of dimension $d$.
We restrict ourselves to positive payoffs because the main interest is to find those dimensions which are more effective in predicting a given edge, leaving for future work the analysis of the negative effects. 
In Figure \ref{fig:karate} we highlight the explanation subgraphs in the \textsc{Karate-Club} dataset for 2-dimensional \textsc{DeepWalk} embeddings.
\begin{figure}
    \centering
    \includegraphics[width=0.4\textwidth]{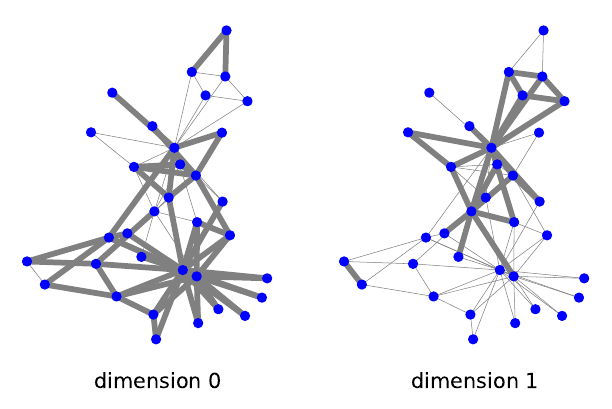}
    \caption{Utility induced subgraphs for 2-dimensional \textsc{DeepWalk} embeddings trained on \textsc{Karate-Club}.}
    \label{fig:karate}
\end{figure}
We say that subgraphs depicted in Figure~\ref{fig:karate} are global explanations of \textsc{DeepWalk} because they allow associating any model dimension with pieces of the data, and provide a global interpretation that is decomposed into per-dimension views.

\subsection{Measuring  comprehensibility and sparsity}
\label{sec:interpret_scores}

Here we define metrics to quantify the quality of the extracted subgraph explanations.
Specifically, we introduce two interpretability metrics to measure the comprehensibility and sparsity of the per-dimension induced subgraphs.

\textbf{Community-aware metric.} 
Let $\mathcal{E}_d$ denote the set of edges in the explanation subgraph for dimension $d$. Given the information on important subgraphs for example pathways in case of biological networks or communities in social networks, we measure the relevance of explanation subgraphs to these communities/subgraphs using precision and recall scores.
Let $\mathcal{P} = \{\mathcal{P}_1,\dots,\mathcal{P}_n \}$ denote the set of ground-truth link partitions/communities/subgraphs of the input graph.  
Later in the experiment section, we will also describe how to obtain such ground-truth subgraphs  when these are not given. With the membership function $m : \mathcal{E} \rightarrow \mathcal{P}$, we first compute precision and recall metrics which measure the association strength of extracted explanation subgraphs with the given ground-truth important subgraphs/communities.
\begin{equation}
    \mathrm{precision}(\mathcal{E}_d,\mathcal{P}_i) = \frac{| \{ (u,v) \in \mathcal{E}_d : m(u,v)=\mathcal{P}_i \} |}{|\mathcal{E}_d|}
\end{equation}
\begin{equation}
    \mathrm{recall}(\mathcal{E}_d,\mathcal{P}_i) = \frac{| \{ (u,v) \in \mathcal{E}_d : m(u,v)=\mathcal{P}_i \} |}{| \mathcal{P}_i |}
\end{equation}
We then compute the interpretability score $I_d$ as the maximum F1 score over all given ground-truth communities.
\begin{equation}
    \hat{\mathcal{P}}_d = \underset{\mathcal{P}_i \in \mathcal{P}}{\mathrm{argmax}} ~\mathrm{F1}(\mathcal{E}_d,\mathcal{P}_i) ; \hspace{10mm}
    I^{(com)}_d = \underset{\mathcal{P}_i \in \mathcal{P}}{\mathrm{max}} ~\mathrm{F1}(\mathcal{E}_d,\mathcal{P}_i)
\end{equation}
where F1-score is the harmonic mean between $\mathrm{precision}(\mathcal{E}_d,\mathcal{P}_i)$ and $\mathrm{recall}(\mathcal{E}_d,\mathcal{P}_i)$. Higher values of $I_d$ indicate the dimension $d$ is strongly associated with a single community. Global community-aware interpretability can be quantified with the average $I^{(com)} = \frac{1}{|\mathcal{D}|}\sum_{d \in \mathcal{D}}I^{(com)}_d$.

\textbf{Sparsity-aware metric.} 
In the absence of ground-truth community information, we can anyhow quantify in an unsupervised manner whether dimensions can highlight structure-relevant subgraphs. In particular, without any cognition on community structure, it is highly preferable that interpretable directions of the embedding space are associated with a minimal set of significant edges. Inspired by explanation masks in graph neural networks \cite{yuan2020explainability}, we formulate its calculation using Shannon entropy \cite{funke2022z}:

\begin{equation}
    I^{(sp)}_d = - \frac{1}{\log |\mathcal{E}|}\sum_{(u,v) \in \mathcal{E}} \left(\frac{[(u,v) \in \mathcal{E}_d]}{z_d}\right) \log \left(\frac{[(u,v) \in \mathcal{E}_d]}{z_d}\right)
\end{equation}
where the function $[*]$ returns 1 if the proposition inside is true (and 0 otherwise), and $z_d=\sum_{(u,v) \in \mathcal{E}}[(u,v) \in \mathcal{E}_d]$ is a normalization for the correct computation of the Shannon entropy. Lower values indicate that embedding dimensions are associated with smaller-sized subgraphs.
Global sparsity-aware interpretability can be quantified with the average $I^{(sp)} = \frac{1}{|\mathcal{D}|}\sum_{d \in \mathcal{D}}I^{(sp)}_d$.

\section{Our Approach: Dimension-based interpretable node embedding}
\label{sec:dine_method}
\begin{figure*}[h!]
    \centering\includegraphics[width=0.8\textwidth, trim={0cm 1.cm 0cm 2.cm},clip]{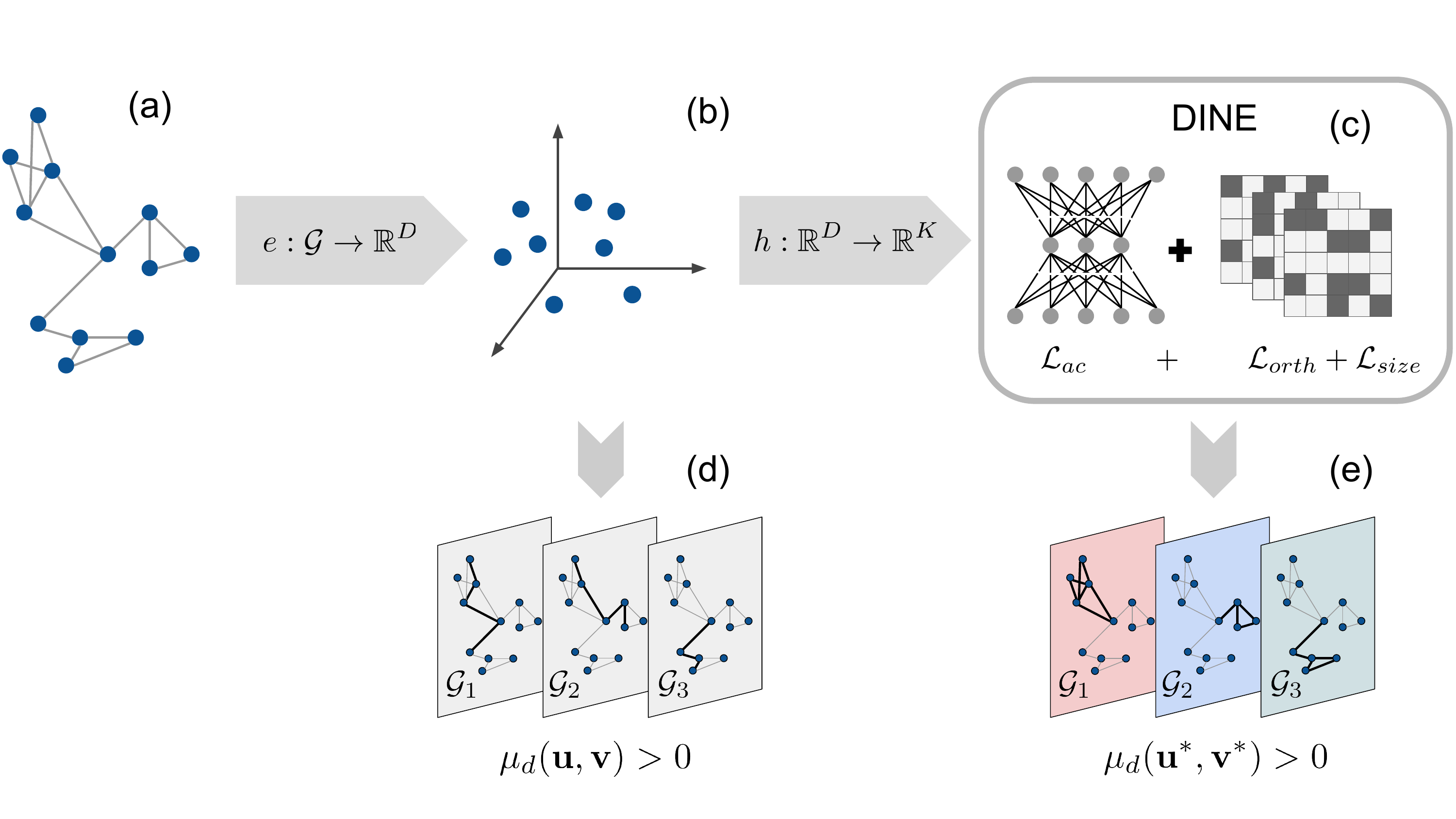}
    \caption{Schematic view of our methodology. Starting from a graph (a), we consider a given embedding representation (b), to which we apply the method DINE (c). Explanations are given in the form of per-dimension subgraphs, both for the starting embedding (d) and the DINE embedding (e).}
    \label{fig:my_label}
\end{figure*}
In previous sections, we proposed the utility-induced subgraphs as explanations to interpret node embedding dimensions.  Unsurprisingly, as we show in Figure~\ref{fig:heatmaps}(a) for \textsc{DeepWalk}, typically it is difficult to map utility-induced subgraphs to interpretable graph units, mainly because these methods are trained with the unique goal of maximizing reconstruction performance. Filling this gap, we introduce \textit{Dimension-based Interpretable Node Embedding} (DINE), a novel method to improve the interpretability of already trained node embeddings by retrofitting the induced subgraphs which affect interpretation metrics (\S\ref{sec:interpret_scores}). 

We design such retrofit task with an autoencoder architecture, trained to reconstruct embedding vectors in the input \cite{baldi2012autoencoders}. By encoding the input node representations into a hidden feature space, the autoencoder can be regularized in order to promote the learning of interpretable dimensions. Despite the many existing regularizations already used to obtain interpretable embeddings, such as non-negativity \cite{yang2013overlapping, luo_online_2015} or sparsity \cite{subramanian_spine_2018, sun2016sparse}, in this work we employ \textit{orthogonality} regularizers \cite{bansal2018can, massart2022orthogonal, schaaf2019enhancing} to achieve the purpose. Orthogonality is closely related with \textit{disentaglement} \cite{higgins2018towards, higgins2016beta}, which is a key concept implemented in several methods for decoupling correlations between latent dimensions \cite{cha2023orthogonality, song2022orthogonal}, with the results of learning more compact representations whose feature dimensions are associated with independent facets of data.

We argue improving dimension-based orthogonality is more effective for several reasons:
\begin{itemize}
    \item \textit{Distinct features}:  If two dimensions are orthogonal, it means that they are independent of each other and do not share any latent factor. Therefore, each dimension provides unique information which can be interpreted as representing a distinct characteristic of the data.
    
    \item \textit{Separation of concepts}: Orthogonal dimensions in the embedding space can represent independent concepts. For example, in the context of word embeddings, the concept of "gender" might be captured along one dimension, while the concept of "age" might be captured along another. This helps us to easily separate and understand these different features of the data.
    
    \item \textit{Removing redundancy}: Orthogonality implies no redundancy. If two dimensions are not orthogonal, then they project onto each other to some extent, meaning there's some shared information. This shared information could be interpreted as redundancy. By ensuring orthogonality, we ensure that each latent direction provides new, unique information.

    \item \textit{Clear interpretation of distances:} In an orthogonal space, distances directly correlate with dissimilarity. For instance, two orthogonal word embeddings would likely represent words with very different meanings or features, while vectors closer together would represent more similar words.    
\end{itemize}

Contrary to previous works \cite{schaaf2019enhancing} that enforce orthogonality of neural weights, here we employ orthogonalization of the edge reconstruction patterns that directly affect per-dimension utility subgraphs. In this way, we obtain node embeddings whose interpretability is optimized according to  metrics introduced in previous section.
In the next, we first show how we can rephrase the utility optimization in an effective way to be easily handled, and how the method is implemented.

\subsection{Optimization of Marginal Utilities}

DINE aims to learn an opportune mapping $h: \mathbb{R}^{D} \rightarrow \mathbb{R}^{K}$, in such a way that per-dimension subgraphs are highly interpretable in terms of decomposability, comprehensibility and sparsity of explanations. 
We use both $\mathbf{v}^*$ and $h(\mathbf{v})$ to indicate the embedding vectors of node $v \in \mathcal{V}$ mapped with the embedding function $h$, and collected into the matrix $\mathbf{H} \in \mathbb{R}^{K \times |\mathcal{V}|}$. We also refer to $K$ as the cardinality of the set containing the enumerated dimensions of the new space $\mathcal{K}=\{1,\dots K\}$. 

Since subgraphs are the results of positive marginal utilities $\mu_d(\mathbf{u}^*, \mathbf{v}^*)$, we are interested in optimizing the utility measures as a function of the new embedding parameters defined by $h$. In the following paragraphs we show that, assuming the new embedding space to be the unit-size hypercube $[0,1]^{K} \subset \mathbb{R}^{K}$, together with sufficiently high embedding dimensionality, we can simplify the optimization of the utility measure.
In fact, in the following theorem we show that for a given edge, the interpretability (utility) measure for a dimension can be approximated using a single dot product over the embedding pair. 
\begin{theorem*}
    Let be $h: \mathbb{R}^{D} \rightarrow [0,1]^{K}$ the mapping from an embedding encoder of $\mathcal{G} = (\mathcal{V}, \mathcal{E})$ and the $K$-dimensional hypercupe. For high dimensionality $K$, the per-dimension utility score for edge $(u,v) \in \mathcal{E}$, $\mu_d(\mathbf{u}^*,\mathbf{v}^*)$, can be expressed as:
    \begin{equation*}
    \mu_d(\mathbf{u}^*,\mathbf{v}^*) + \mathcal{O}(\frac{1}{K^2}) = \frac{\mathrm{u}^*_d\mathrm{v}^*_d}{K}
    \end{equation*} 
\end{theorem*}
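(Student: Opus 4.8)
The plan is to first turn the definition of the marginal utility into an exact closed form in $K$, and only afterwards pass to asymptotics. Applying Eq.~\eqref{eq:mu_d} with the ambient dimension set taken to be $\mathcal{K}$ (so $|\mathcal{K}|=K$) and unfolding the averaging operator $\Delta_{\mathcal{S}}$, I would write $\Delta_{\mathcal{K}}(\mathbf{u}^*,\mathbf{v}^*)=\tfrac{1}{K}\sum_{j\in\mathcal{K}}\mathrm{u}^*_j\mathrm{v}^*_j$ and $\Delta_{\mathcal{K}\setminus\{d\}}(\mathbf{u}^*,\mathbf{v}^*)=\tfrac{1}{K-1}\bigl(\sum_{j\in\mathcal{K}}\mathrm{u}^*_j\mathrm{v}^*_j-\mathrm{u}^*_d\mathrm{v}^*_d\bigr)$. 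Writing $A:=\sum_{j\in\mathcal{K}}\mathrm{u}^*_j\mathrm{v}^*_j$ for the full inner product and placing the two terms over the common denominator $K(K-1)$ collapses the difference to the identity
\[
\mu_d(\mathbf{u}^*,\mathbf{v}^*)\;=\;\frac{K\,\mathrm{u}^*_d\mathrm{v}^*_d-A}{K(K-1)},
\]
which holds with no approximation whatsoever.

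The second step is a one-term expansion in powers of $1/K$. Using $\tfrac{1}{K-1}=\tfrac{1}{K}+\tfrac{1}{K(K-1)}$ and regrouping, the identity becomes
\[
\mu_d(\mathbf{u}^*,\mathbf{v}^*)\;=\;\frac{\mathrm{u}^*_d\mathrm{v}^*_d}{K}\;+\;\frac{\mathrm{u}^*_d\mathrm{v}^*_d-A}{K(K-1)},
\]
so the theorem reduces to showing that the second summand is $\mathcal{O}(1/K^2)$. The piece $\tfrac{\mathrm{u}^*_d\mathrm{v}^*_d}{K(K-1)}$ is harmless: the hypercube hypothesis $h(v)\in[0,1]^K$ gives $\mathrm{u}^*_d\mathrm{v}^*_d\in[0,1]$, so it is already $\mathcal{O}(1/K^2)$. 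Everything therefore hinges on the term $A=\langle\mathbf{u}^*,\mathbf{v}^*\rangle$.

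The crux — and what I expect to be the main obstacle — is showing that $A$ does not grow with $K$: from the hypercube constraint alone one only obtains $A\le K$, which would weaken the error to $\mathcal{O}(1/K)$. Here I would invoke the ``sufficiently high dimensionality'' regime together with the design of DINE, whose sparsity and low-entropy objectives force each retrofitted embedding to activate only $\mathcal{O}(1)$ coordinates; then $A=\sum_j\mathrm{u}^*_j\mathrm{v}^*_j$ is a sum of $\mathcal{O}(1)$ numbers in $[0,1]$ and hence bounded by a constant independent of $K$. Under that boundedness, $\tfrac{\mathrm{u}^*_d\mathrm{v}^*_d-A}{K(K-1)}=\mathcal{O}(1/K^2)$ with an implied constant depending only on the uniformly bounded coordinates and on that bound for $A$, which yields $\mu_d(\mathbf{u}^*,\mathbf{v}^*)=\tfrac{\mathrm{u}^*_d\mathrm{v}^*_d}{K}+\mathcal{O}(1/K^2)$ as claimed. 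If one prefers to avoid any sparsity assumption, the same algebra delivers the exact decomposition $\mu_d(\mathbf{u}^*,\mathbf{v}^*)=\tfrac{\mathrm{u}^*_d\mathrm{v}^*_d}{K}-\tfrac{\Delta_{\mathcal{K}}(\mathbf{u}^*,\mathbf{v}^*)}{K-1}+\mathcal{O}(1/K^2)$, making transparent that the leading correction is just the normalized decoder score and is absorbed into $\mathcal{O}(1/K^2)$ precisely when that score decays like $1/K$; all remaining manipulations are routine.
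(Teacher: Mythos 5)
Your proof is correct and lands on the same conclusion as the paper's, but it gets there by a cleaner and more elementary route. The paper expands $\frac{1}{K-1}-\frac{1}{K}$ via a geometric series and carries $\mathcal{O}(\frac{1}{K^2})$ remainders through several lines before discarding them; you instead derive the exact rational identity $\mu_d(\mathbf{u}^*,\mathbf{v}^*)=\frac{K\,\mathrm{u}^*_d\mathrm{v}^*_d-\mathbf{u}^*\cdot\mathbf{v}^*}{K(K-1)}=\frac{\mathrm{u}^*_d\mathrm{v}^*_d}{K}+\frac{\mathrm{u}^*_d\mathrm{v}^*_d-\mathbf{u}^*\cdot\mathbf{v}^*}{K(K-1)}$ and only then pass to asymptotics, which isolates the error term in a single transparent fraction. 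More importantly, you correctly identify the actual crux: the hypercube constraint alone gives only $\mathbf{u}^*\cdot\mathbf{v}^*\le K$ and hence an $\mathcal{O}(\frac{1}{K})$ error, so the stated $\mathcal{O}(\frac{1}{K^2})$ rate requires $\mathbf{u}^*\cdot\mathbf{v}^*=\mathcal{O}(1)$. This is exactly the assumption the paper makes when it writes $\mathbf{u}^*\cdot\mathbf{v}^*=\alpha_{uv}K$ and then posits $\alpha_{uv}\propto\frac{1}{K}$ for large $K$; the paper's justification (that the individual products are ``independent of $K$'') is no more rigorous than your appeal to the sparsity/low-entropy design of DINE, and arguably less honest about what is being assumed. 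So your argument buys exactness up to the final step and an explicit statement of the hypothesis under which the claimed rate holds, whereas the paper's buys nothing extra; under the same boundedness assumption both proofs are valid, and without it both degrade to $\mathcal{O}(\frac{1}{K})$.
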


\begin{proof}
    We start writing the formula for $\mu_d(\mathbf{u}^*,\mathbf{v}^*)$:
 \begin{equation*}
  \mu_d(\mathbf{u}^*,\mathbf{v}^*) = \frac{1}{K}\sum_{q \in \mathcal{K}} \mathrm{u}^*_q\mathrm{v}^*_q - \frac{1}{K-1}\sum_{q \in \mathcal{K}\setminus \{d\}} \mathrm{u}^*_q\mathrm{v}^*_q  
\end{equation*}
Using the expression $\sum_{q \in \mathcal{K}} \mathrm{u}^*_q\mathrm{v}^*_q = \mathbf{u}^* \cdot \mathbf{v}^* = \mathrm{u}^*_d\mathrm{v}^*_d + \sum_{q \in \mathcal{K}\setminus \{d\}} \mathrm{u}^*_q\mathrm{v}^*_q $, we find that:
\begin{align*}
  \mu_d&(\mathbf{u}^*,\mathbf{v}^*) = \frac{1}{K} \mathrm{u}^*_d\mathrm{v}^*_d - \left(\frac{1}{K-1} - \frac{1}{K}\right)\sum_{q \in \mathcal{K}\setminus \{d\}} \mathrm{u}^*_q\mathrm{v}^*_q  \\
  &= \frac{1}{K} \mathrm{u}^*_d\mathrm{v}^*_d - \frac{1}{K} \left(\frac{1}{1-\frac{1}{K}} -1\right) \left( \mathbf{u}^*\cdot\mathbf{v}^*-\mathrm{u}^*_d\mathrm{v}^*_d\right)
\end{align*}
Ignoring the case of 1-dimensional embeddings, $K>1$ and $0<\frac{1}{K}<1$, then the expression in the first parenthesis can be rewritten using the geometric series:
\begin{equation*}
\frac{1}{1-\frac{1}{K}}-1 = \sum_{l=0}^\infty \frac{1}{K^l} - 1 = \frac{1}{K} + \mathcal{O}(\frac{1}{K^2})
\end{equation*}
replacing the formula with this last result, we have:

\begin{align*}
K\cdot&\mu_d(\mathbf{u}^*,\mathbf{v}^*) = \mathrm{u}^*_d\mathrm{v}^*_d - \left( \frac{1}{K} + \mathcal{O}(\frac{1}{K^2}) \right)\left( \mathbf{u^*}\cdot\mathbf{v}^*-\mathrm{u}^*_d\mathrm{v}^*_d\right)\\
&= \mathrm{u}^*_d\mathrm{v}^*_d \left(1+ \frac{1}{K} + \mathcal{O}(\frac{1}{K^2})\right) - \mathbf{u}^*\cdot\mathbf{v}^* \left( \frac{1}{K} + \mathcal{O}(\frac{1}{K^2})\right)\\
&= \big( \mathrm{u}^*_d\mathrm{v}^*_d -\frac{1}{K} \mathbf{u}^* \cdot \mathbf{v}^* \big) \left(1+\frac{1}{K}+\mathcal{O}(\frac{1}{K^2})\right) 
\end{align*}

In the case of the function $h$, we have $\mathrm{u}_d^*\cdot \mathrm{v}_d^* \in [0,1]$ and so $\mathbf{u}^* \cdot \mathbf{v}^* = \alpha_{uv} K$, where $0 < \alpha_{uv} < 1$. With sufficient high dimensionality $K$, we can assume that values $\mathrm{u}_d^*\cdot \mathrm{v}_d^*$ are independent from the dimensionality $K$, then $\alpha_{uv} \propto \frac{1}{K}$.
Therefore, neglecting infinitesimal terms in parenthesis we obtain $\mu_d(\mathbf{u}^*,\mathbf{v}^*) +\frac{\alpha_{uv}}{K} +\mathcal{O}(\frac{1}{K^2}) = \mu_d(\mathbf{u}^*,\mathbf{v}^*)  +\mathcal{O}(\frac{1}{K^2}) \approx  \frac
{\mathrm{u}^*_d\mathrm{v}^*_d}{K}$.
\end{proof}

We define $\frac
{\mathrm{u}^*_d\mathrm{v}^*_d}{K}$ as the entries $m^*_d(u,v)$ of $K$ continuous-valued graph masks $\{ \mathbf{M}_d \in \mathbb{R}^{|\mathcal{V}| \times |\mathcal{V}|} \}_{d\in \mathcal{K}}$, 
that can be easily computed as the outer products of the rows of $\mathbf{H}$, namely $m_d^*(u,v) = (\mathbf{H}_{d,:} \otimes \mathbf{H}_{d,:})_{uv}$.

The equation
$\mu_d(\mathbf{u}^*,\mathbf{v}^*)  +\mathcal{O}(\frac{1}{K^2}) \approx  \frac
{\mathrm{u}^*_d\mathrm{v}^*_d}{K} \equiv m^*_d(u,v)$ tells that the quantities $m^*_d(u,v)$ and  $\mu_d(\mathbf{u}^*,\mathbf{v}^*)$ differ by a negligible term $\mathcal{O}(\frac{1}{K^2})$. Consequently, we can optimize the quantities $m^*_d(u,v)$ that are computed from individual products.
Graph masks have the role of highlighting the structure-relevant edges for any direction in the $K$-dimensional learned space.

\subsection{Method Implementation}

We now describe the retro-fitting optimization task and identify the key design choices which allow obtaining more interpretable induced subgraphs, and whose effectiveness is shown in the experiments section. Please refer to Figure \ref{fig:my_label} for a schematic diagram of our approach.

We implemented $h$ as the latent projection of a single-layer autoencoder, namely $h(\mathbf{v}) = \sigma(\mathbf{W}^{(0)} \mathbf{v} + \mathbf{b}^{(0)})$, which returns $\tilde{\mathbf{v}} = \mathbf{W}^{(1)} h(\mathbf{v}) + \mathbf{b}^{(1)}$ as output.  The hidden layer matrix of the autoencoder, $\mathbf{H}\in \mathbb{R}^{K \times|\mathcal{V}|}$, collects the components $H_{d,v} = \mathrm{h}_d(\mathbf{v}) \in [0,1]$ of the interpretable embedding vectors that we aim to learn. 

We add regularization constraints on the hidden embedding matrix $\mathbf{H}$ while training the autoencoder, in order to learn optimal graph masks. Specifically, we minimize the following loss: 
\begin{equation}
    \mathcal{L} = \mathcal{L}_{ac}(\mathbf{X}, \tilde{\mathbf{X}}) + \sum \mathcal{L}_{reg}(\mathbf{H})
\end{equation}
where 
$ \mathcal{L}_{ac}(\mathbf{X}, \tilde{\mathbf{X}}) = \mathrm{MSE}(\mathbf{X}, \tilde{\mathbf{{X}}}) = \frac{1}{|\mathcal{V}|}\sum_{v \in \mathcal{V}}||\mathbf{X}_{:,v} - \tilde{\mathbf{X}}_{:,v}||^2$ is the mean squared error
between the input and output embedding matrices $\mathbf{X}, \tilde{\mathbf{X}} \in \mathbb{R}^{D \times |\mathcal{V}|}$.
We jointly optimise masks matrices  $\{ \mathbf{M}_d \in \mathbb{R}^{|\mathcal{V}| \times |\mathcal{V}|} \}_{d\in\mathcal{K}} $, computed from hidden layer parameters $\mathbf{H}$,  and the autoencoder parameters $\{ \mathbf{W}^{(0)} \in \mathbb{R}^{K \times D}, \mathbf{W}^{(1)} \in \mathbb{R}^{D \times K}, \mathbf{b}^{(0)} \in \mathbb{R}^K, \mathbf{b}^{(1)} \in \mathbb{R}^D\} $.
We determined the following regularization terms as optimal for promoting interpretable dimensions:

\begin{itemize}[leftmargin=*]
\item Induced subgraphs might have minimal overlaps between each other in order to be interpreted as communities. 
Inspired by graph clustering \cite{bianchi2020spectral}, we squeeze embedding mask matrices into one partition matrix $\mathbf{P} \in \mathbb{R}^{K\times|\mathcal{V}|}$, with entries $ P_{d,v} = \sum_{u\in\mathcal{V}} m^*_d(u,v)$, computed by aggregating edge reconstruction scores with the same target node\footnote{Aggregating over the source node would give the same result since we work with undirected graphs and $M_d(u,v)=M_d(v,u)$.}. In order to encourage relevant subgraphs to be incorporated into different embedding axes, we optimize the following \emph{Orthogonality Loss}:
\begin{equation}
    \mathcal{L}_{orth} = \mathrm{MSE}(\frac{\mathbf{P}\mathbf{P}^{\mathrm{T}}}{||\mathbf{P}\mathbf{P}^{\mathrm{T}}||_F}, \frac{\mathbf{1}_{K}}{||\mathbf{1}_{K}||_F})
\end{equation}
The use of node-based partition is due to scalability reasons: the entries of the partition matrix $P_{d,v} = \sum_{u} m^*_d(u,v) \propto h_d(\mathbf{v}) \big[\sum_{u} h_d(\mathbf{u})\big]$ can be computed avoiding the explicit calculation and caching of $\mathcal{O}(K\times|\mathcal{V}|\times|\mathcal{V}|)$ parameter for graph masks, reducing the complexity to $\mathcal{O}(K\times|\mathcal{V}|)$.
\item In order to avoid degenerate solutions due to the orthogonality constraint, e.g. all relevant subgraphs reconstructed in the same dimension, we enforce the size of every mask $s_d = \sum_{u,v} m^*_d(u,v)$ to be non-zero. This constraint is accomplished by maximizing the entropy of the size variables $\{s_d\}_{d \in \mathcal{K}}$, opportunely normalized, or equivalently minimizing the \emph{Size Loss}:
\begin{equation}
    \mathcal{L}_{size} = \log |\mathcal{K}| + \sum_{d\in\mathcal{K}} \frac{s_d}{\sum_{q \in \mathcal{K}}s_q} \log \frac{s_d}{\sum_{q \in \mathcal{K}}s_q}
\end{equation}

\end{itemize}

The full objective loss is given by:
\begin{equation}
    \mathcal{L} = \mathcal{L}_{ac}(\mathbf{X}, \tilde{\mathbf{X}}) + \mathcal{L}_{orth}(\mathbf{H}) + \mathcal{L}_{size}(\mathbf{H})
\end{equation}

\section{Experiments}
\label{sec:experiments}

In this section, we present the results of our study on the \textsc{Dine} model from different perspectives. The main objective is to address the following research questions:

\begin{description}
\item [RQ1] How does the interpretability of \textsc{Dine} compare to those of standard embedding techniques?
\item [RQ2] How well does \textsc{Dine} perform in the link prediction task?
\item [RQ3] Is \textsc{Dine} suitable for practical use, particularly in scenarios requiring scalability?
\end{description}

In the following sections, we describe the data, models, and tasks used in the comparison to address our research questions.

\subsection{Data and Models}

\begin{table}[ht]
 \caption{Summary statistics about real-world graph data. In order: number of nodes $|\mathcal{V}|$, number of edges $|\mathcal{E}|$, number of extracted communities $|\mathcal{C}|$ with Louvain method.}
    \makebox[\linewidth]{
    \centering
    \begin{tabular}{@{}l@{~~}cccc@{}}
         \toprule
         Dataset & $|\mathcal{V}|$ & $|\mathcal{E}|$  & $|\mathcal{C}|$ \\
         \midrule
         \cora & 2,485 & 5,069 &  28\\
         \citeseer & 2,110 & 3,668  & 35\\
         \pubmed  & 19,717 & 44,324  & 38\\
         \blogcatalog  & 5,196 & 171,743  & 10\\
         \flickr  & 7,575 & 239,738  & 9\\
         \wiki & 2,357 & 11,592  & 17\\
        
         \bottomrule
    \end{tabular}
    }
    \label{tab:datastats}
\end{table}

We present our results on a variety of benchmark datasets used in prior work \cite{yang2020scaling}: three citation networks (\cora, \citeseer~and \pubmed), two social networks (\blogcatalog~and \flickr), and a web pages network (\wiki).  
Despite their original format, we restrict our analysis to the large connected component of any graph, considered unweighted and undirected. 

As described in Section \ref{sec:interpret_scores}, we rely on \textit{ground-truth link partitions} for interpretability metrics based on community structure. 
Many empirical graphs have node metadata that can be used for node-to-community mapping.
However, the use of metadata as structure-aware labels has recently been criticized by previous works \cite{peel2017ground, hric2014community}.
Instead, we use community detection to discover partition labels. 
We avoid computationally expensive and overlapping community detection methods~\cite{ding2016overlapping, fortunato2010community,ahn2010link, evans2009line} and use the arguably intuitive and simpler Louvain detection method \cite{blondel2008fast} to derive edge labels based on node-level graph communities.
Specifically, 
we ran Louvain detection method \cite{blondel2008fast} to extract the node-level communities $\mathcal{C} = \{\mathcal{C}_1, \dots, \mathcal{C}_m\}$ 
and we assign paritition label for a given edge $(u,v)$ the set $\{c(u), c(v)\}$, where $c:\mathcal{V}\rightarrow \mathcal{C}$ is the node-level community membership function. We report datasets statistics in Table \ref{tab:datastats}. 

We use the following baseline methods:
\begin{itemize}[leftmargin=*]
    \item \textsc{DeepWalk} \cite{perozzi2014deepwalk}, skip-gram based model that computes node embeddings from random walks co-occurrence statistics. We train \textsc{Node2Vec}\footnote{\texttt{https://github.com/eliorc/node2vec}} for 5 epochs with the following parameters: \texttt{p=1}, \texttt{q=1}, \texttt{walk\_length=10}, \texttt{num\_walks=20}, \texttt{window\_size=5}.
    \item GAE \cite{kipf2016variational}, neural network model with a GCN encoder trained on adjacency matrix reconstruction. The model\footnote{\texttt{https://github.com/zfjsail/gae-pytorch}} is trained for 200 iterations using Adam optimizer and learning rate of 0.01 as described in the main paper. The GCN hidden layer size is taken double the output size.
    \item \textsc{GemSec} \cite{rozemberczki_gemsec_2019}, a variation of \textsc{DeepWalk} which jointly learns node embeddings and node clusters. We train the model\footnote{\texttt{https://github.com/benedekrozemberczki/karateclub}} with the same configuration as \textsc{DeepWalk}, plus the number of clusters fixed equal to the number of dimensions.
    \item \textsc{SPINE} \cite{subramanian_spine_2018}, a post-processing technique based on $k-sparse$ denoising autoencoder to generate sparse embeddings. We train the original model\footnote{\texttt{https://github.com/harsh19/SPINE}} for 2000 iterations with sparsity 0.15 and learning rate 0.1.
\end{itemize}

\textsc{Dine} is trained for $2000$ iterations,  and learning rate of $0.1$,  with \textsc{DeepWalk} and GAE embeddings, to show the capability of \textsc{Dine} in handling models with different inductive biases. With \textsc{Spine} we post-process \textsc{DeepWalk} vectors for a dedicated analysis.   

\begin{table*}[t!]
    \begin{minipage}{0.6\linewidth}
    \centering
    \begin{small}
        \captionof{table}{Community-aware scores for interpretability evaluation of different embedding methods. For each dataset, we highlight the best (highest) score.}
        \label{tab:int_community}
        \begin{tabular}{lccccc}
         \toprule
          & \citeseer & \pubmed & \blogcatalog & \flickr & \wiki \\
         \midrule
          \textsc{DeepWalk} & 
          \makecell{0.433\\{\smaller($\pm$0.014)}}& 
          \makecell{0.422\\{\smaller($\pm$0.011)}}&
          \makecell{0.432\\{\smaller($\pm$0.007)}}&
          \makecell{0.499\\{\smaller($\pm$0.105)}}&
          \makecell{0.445\\{\smaller($\pm$0.025)}}\\
          
          GAE & 
          \makecell{0.420\\{\smaller($\pm$0.010)}}&
          \makecell{0.487\\{\smaller($\pm$0.019)}}&
          \makecell{0.496\\{\smaller($\pm$0.006)}}&
          \makecell{0.625\\{\smaller($\pm$0.021)}}&
          \makecell{0.460\\{\smaller($\pm$0.014)}}\\
          
         \textsc{GemSec}  &
         \makecell{0.446\\{\smaller($\pm$0.014)}}& 
         \makecell{0.431\\{\smaller($\pm$0.009)}}&
         \makecell{0.392\\{\smaller($\pm$0.030)}}&
         \makecell{0.421\\{\smaller($\pm$0.022)}}&
         \makecell{0.454\\{\smaller($\pm$0.002)}}\\
         
          \textsc{DeepWalk}+\textsc{DINE}  & 
          \makecell{\textbf{0.641}\\{\smaller($\pm$0.027)}}& 
          \makecell{\textbf{0.605}\\{\smaller($\pm$0.027)}}&
          \makecell{0.590\\{\smaller($\pm$0.023)}}&
          \makecell{\textbf{0.657}\\{\smaller($\pm$0.014)}}&
          \makecell{\textbf{0.652}\\{\smaller($\pm$0.010)}}\\
          
          GAE+\textsc{DINE}  & 
          \makecell{0.526\\{\smaller($\pm$0.008)}}& 
          \makecell{0.593\\{\smaller($\pm$0.011)}}&
          \makecell{\textbf{0.600}\\{\smaller($\pm$0.008)}}&
          \makecell{0.595\\{\smaller($\pm$0.013)}}&
          \makecell{0.631\\{\smaller($\pm$0.017)}}\\
          
         \midrule
         \textsc{DeepWalk} & 
         \makecell{0.491\\{\smaller($\pm$0.016)}}&
         \makecell{0.457\\{\smaller($\pm$0.006)}}&
         \makecell{0.483\\{\smaller($\pm$0.009)}}&
         \makecell{0.469\\{\smaller($\pm$0.018)}}&
         \makecell{0.496\\{\smaller($\pm$0.009)}}\\
         
         \textsc{DeepWalk}+\textsc{SPINE}& 
         \makecell{0.543\\{\smaller($\pm$0.060)}}& 
         \makecell{\textbf{0.608}\\{\smaller($\pm$0.037)}}&
         \makecell{\textbf{0.632}\\{\smaller($\pm$0.082)}}&
         \makecell{\textbf{0.703}\\{\smaller($\pm$0.001)}}&
         \makecell{0.611\\{\smaller($\pm$0.015)}}\\
         
         \textsc{DeepWalk}+\textsc{DINE}& 
         \makecell{\textbf{0.641}\\{\smaller($\pm$0.027)}}& 
         \makecell{0.605\\{\smaller($\pm$0.027)}}&
         \makecell{0.590\\{\smaller($\pm$0.023)}}&
         \makecell{0.657\\{\smaller($\pm$0.014)}}&
         \makecell{\textbf{0.652}\\{\smaller($\pm$0.010)}}\\
        
         \bottomrule
    \end{tabular}
    \end{small}
    \end{minipage}\hfill
     \begin{minipage}{0.4\linewidth}
     \begin{subfigure}{0.49\linewidth}
         \centering
         \includegraphics[width=\linewidth]{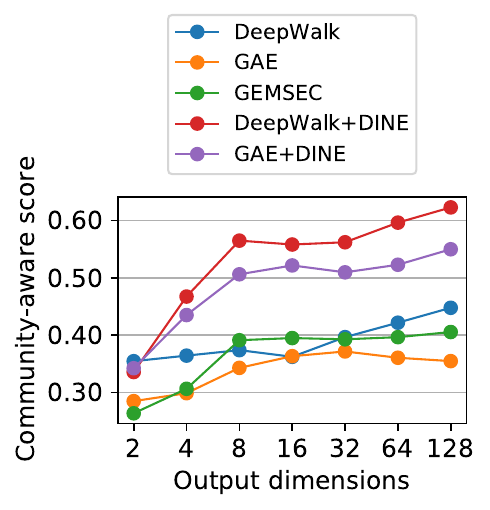}
     \end{subfigure}
     \begin{subfigure}{0.475\linewidth}
         \centering
         \includegraphics[width=\linewidth]{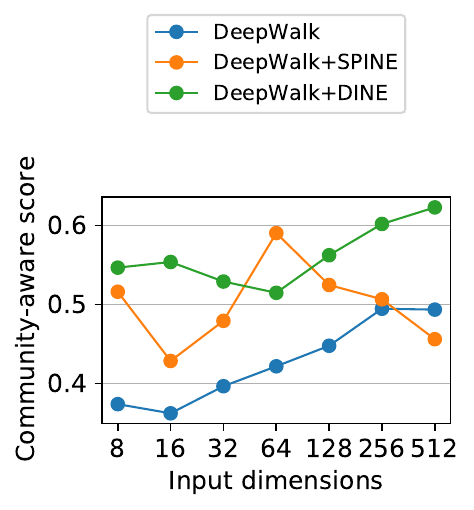}
     \end{subfigure}
    \captionof{figure}{Community-aware scores for \textsc{Cora} (higher is better). 
    On the left, we compare the community-aware scores of different dense embedding methods when varying the number of output dimensions and choosing the best score among models over different number of input dimensions.
    On the right, we compare the scores of \textsc{DeepWalk} and \textsc{Spine} when varying the number of input dimensions and choosing the best score among models over different number of output dimensions.
     }
     \label{fig:cora_community}
    \end{minipage}
    
\end{table*}
\begin{table*}[ht]
    \begin{minipage}{0.6\linewidth}
    \centering
    \begin{small}
        \captionof{table}{Sparsity-aware scores for interpretability evaluation of different embedding methods. For each dataset, we highlight the best (lowest) score.}        
        \label{tab:int_sparsity}
        \begin{tabular}{lccccc}
         \toprule
          & \citeseer & \pubmed & \blogcatalog & \flickr & \wiki \\
         \midrule
          \textsc{DeepWalk} & 
          \makecell{0.778\\{\smaller($\pm$0.006)}}& 
          \makecell{0.849\\{\smaller($\pm$0.001)}}&
          \makecell{0.865\\{\smaller($\pm$0.002)}}&
          \makecell{0.874\\{\smaller($\pm$0.004)}}&
          \makecell{0.820\\{\smaller($\pm$0.003)}}\\
          
          GAE & 
          \makecell{0.825\\{\smaller($\pm$0.004)}}&
          \makecell{0.851\\{\smaller($\pm$0.005)}}&
          \makecell{0.871\\{\smaller($\pm$0.009)}}&
          \makecell{0.785\\{\smaller($\pm$0.001)}}&
          \makecell{0.844\\{\smaller($\pm$0.002)}}\\
          
         \textsc{GemSec}  &  
         \makecell{0.817\\{\smaller($\pm$0.003)}}& 
         \makecell{0.865\\{\smaller($\pm$0.002)}}&
         \makecell{0.911\\{\smaller($\pm$0.003)}}&
         \makecell{0.916\\{\smaller($\pm$0.002)}}&
         \makecell{0.846\\{\smaller($\pm$0.004)}}\\
         
          \textsc{DeepWalk}+\textsc{DINE}  & 
          \makecell{\textbf{0.630}\\{\smaller($\pm$0.015)}}& 
          \makecell{\textbf{0.684}\\{\smaller($\pm$0.242)}}&
          \makecell{\textbf{0.749}\\{\smaller($\pm$0.006)}}&
          \makecell{\textbf{0.372}\\{\smaller($\pm$0.069)}}&
          \makecell{\textbf{0.688}\\{\smaller($\pm$0.003)}}\\
          
          GAE+\textsc{DINE}  & 
          \makecell{0.728\\{\smaller($\pm$0.009)}}& 
          \makecell{0.805\\{\smaller($\pm$0.003)}}&
          \makecell{0.820\\{\smaller($\pm$0.012)}}&
          \makecell{0.711\\{\smaller($\pm$0.021)}}&
          \makecell{0.756\\{\smaller($\pm$0.003)}}\\
          
         \midrule
         \textsc{DeepWalk} & 
         \makecell{0.755\\{\smaller($\pm$0.003)}}&
         \makecell{0.836\\{\smaller($\pm$0.001)}}&
         \makecell{0.840\\{\smaller($\pm$0.001)}}&
         \makecell{0.874\\{\smaller($\pm$0.004)}}&
         \makecell{0.796\\{\smaller($\pm$0.004)}}
         \\
         \textsc{DeepWalk}+\textsc{SPINE}& 

         \makecell{0.703\\{\smaller($\pm$0.025)}}& 
         \makecell{\textbf{0.741}\\{\smaller($\pm$0.033)}}&
         \makecell{\textbf{0.610}\\{\smaller($\pm$0.089)}}&
         \makecell{0.729\\{\smaller($\pm$0.125)}}&
         \makecell{\textbf{0.529}\\{\smaller($\pm$0.097)}}
         \\
         \textsc{DeepWalk}+\textsc{DINE}& 
         \makecell{\textbf{0.630}\\{\smaller($\pm$0.015)}}& 
         \makecell{0.748\\{\smaller($\pm$0.008)}}&
         \makecell{0.749\\{\smaller($\pm$0.006)}}&
         \makecell{\textbf{0.681}\\{\smaller($\pm$0.060)}}&
         \makecell{0.688\\{\smaller($\pm$0.003)}}
         \\
         \bottomrule
    \end{tabular}
    \end{small}
    \end{minipage}\hfill
     \begin{minipage}{0.4\linewidth}
     \begin{subfigure}{0.49\linewidth}
         \centering
         \includegraphics[width=\linewidth]{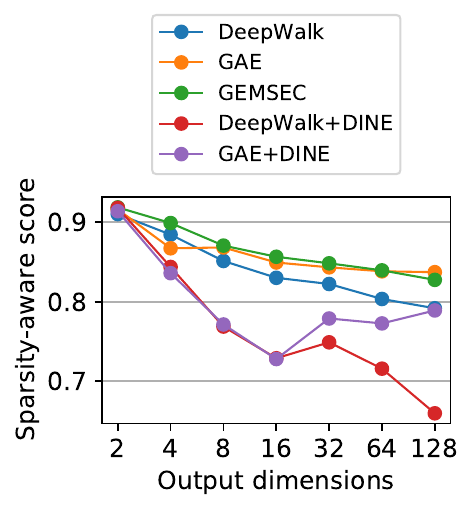}
     \end{subfigure}
     \begin{subfigure}{0.475\linewidth}
         \centering
         \includegraphics[width=\linewidth]{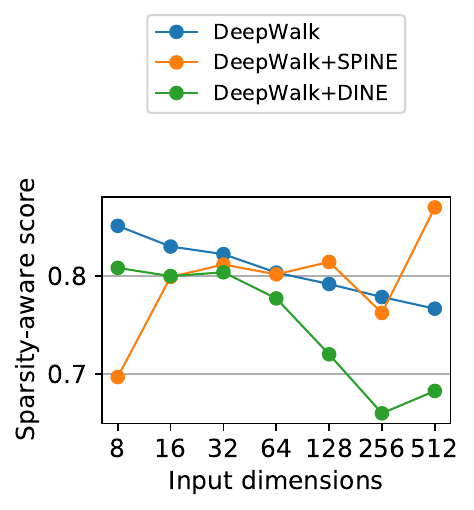}
     \end{subfigure}
     \captionof{figure}{Sparsity-aware scores for  \textsc{Cora} (lower is better). 
    On the left, we compare the sparsity-aware scores of different dense embedding methods when varying the number of output dimensions and choosing the best score among models with a different number of input dimensions.
    On the right, we compare the scores of \textsc{DeepWalk} and \textsc{Spine} when varying the number of input dimensions and choosing the best score among models with a different number of output dimensions.
     }
         \label{fig:cora_sparsity}
    \end{minipage}
\end{table*}
\begin{table*}[ht]
    \begin{minipage}{0.6\linewidth}
    \centering
    \begin{small}
        \captionof{table}{ROC-AUC scores for link prediction performance of different embedding methods. For each dataset, we show in boldface letters the best score, and we highlight with gray background other methods that reach at least $95\%$ of the best model score.}   
        \label{tab:linkpred}
        \begin{tabular}{lccccc}
         \toprule
          & \citeseer & \pubmed & \blogcatalog & \flickr & \wiki \\
         \midrule
          \textsc{DeepWalk} & 
          \makecell{\colorbox{lightgray}{0.945}\\{\smaller($\pm$0.007)}}& 
          \makecell{\textbf{0.964}\\{\smaller($\pm$0.002)}}&
          \makecell{0.756\\{\smaller($\pm$0.001)}}&
          \makecell{0.656\\{\smaller($\pm$0.001)}}&
          \makecell{0.848\\{\smaller($\pm$0.003)}}\\
          
          GAE & 
          \makecell{\textbf{0.952}\\{\smaller($\pm$0.006)}}&
          \makecell{\colorbox{lightgray}{0.958}\\{\smaller($\pm$0.002)}}&
          \makecell{\textbf{0.869}\\{\smaller($\pm$0.003)}}&
          \makecell{\colorbox{lightgray}{0.894}\\{\smaller($\pm$0.004)}}&
          \makecell{\textbf{0.942}\\{\smaller($\pm$0.002)}}\\
          
         \textsc{GemSec}  &  
         \makecell{\colorbox{lightgray}{0.942}\\{\smaller($\pm$0.007)}}& 
         \makecell{\colorbox{lightgray}{0.951}\\{\smaller($\pm$0.003)}}&
         \makecell{0.794\\{\smaller($\pm$0.002)}}&
         \makecell{0.670\\{\smaller($\pm$0.003)}}&
         \makecell{\colorbox{lightgray}{0.905}\\{\smaller($\pm$0.004)}}\\
         
          \textsc{DeepWalk}+\textsc{DINE}  & 
          \makecell{\colorbox{lightgray}{0.947}\\{\smaller($\pm$0.010)}}& 
          \makecell{\colorbox{lightgray}{0.933}\\{\smaller($\pm$0.012)}}&
          \makecell{0.747\\{\smaller($\pm$0.006)}}&
          \makecell{0.759\\{\smaller($\pm$0.009)}}&
          \makecell{\colorbox{lightgray}{0.905}\\{\smaller($\pm$0.004)}}\\
          
          GAE+\textsc{DINE}  & 
          \makecell{\colorbox{lightgray}{0.920}\\{\smaller($\pm$0.009)}}& 
          \makecell{\colorbox{lightgray}{0.961}\\{\smaller($\pm$0.001)}}&
          \makecell{0.820\\{\smaller($\pm$0.010)}}&
          \makecell{\textbf{0.911}\\{\smaller($\pm$0.003)}}&
          \makecell{\textbf{0.942}\\{\smaller($\pm$0.004)}}\\
          
         \midrule
         \textsc{DeepWalk} & 
         \makecell{\textbf{0.949}\\{\smaller($\pm$0.008)}}&
         \makecell{\textbf{0.965}\\{\smaller($\pm$0.001)}}&
         \makecell{\textbf{0.758}\\{\smaller($\pm$0.001)}}&
         \makecell{0.667\\{\smaller($\pm$0.002)}}&
         \makecell{0.848\\{\smaller($\pm$0.002)}}\\
         
         \textsc{DeepWalk}+\textsc{SPINE}& 
         \makecell{0.897\\{\smaller($\pm$0.021)}}& 
         \makecell{0.902\\{\smaller($\pm$0.006)}}&
         \makecell{0.673\\{\smaller($\pm$0.014)}}&
         \makecell{0.632\\{\smaller($\pm$0.035)}}&
         \makecell{0.816\\{\smaller($\pm$0.014)}}\\
         
         \textsc{DeepWalk}+\textsc{DINE}& 
         \makecell{\colorbox{lightgray}{0.947}\\{\smaller($\pm$0.010)}}& 
         \makecell{\colorbox{lightgray}{0.933}\\{\smaller($\pm$0.012)}}&
         \makecell{\colorbox{lightgray}{0.747}\\{\smaller($\pm$0.006)}}&
         \makecell{\textbf{0.759}\\{\smaller($\pm$0.009)}}&
         \makecell{\textbf{0.905}\\{\smaller($\pm$0.004)}}\\
        
         \bottomrule
    \end{tabular}
    \end{small}
    \end{minipage}\hfill
     \begin{minipage}{0.4\linewidth}
     \begin{subfigure}{0.49\linewidth}
         \centering
         \includegraphics[width=\linewidth]{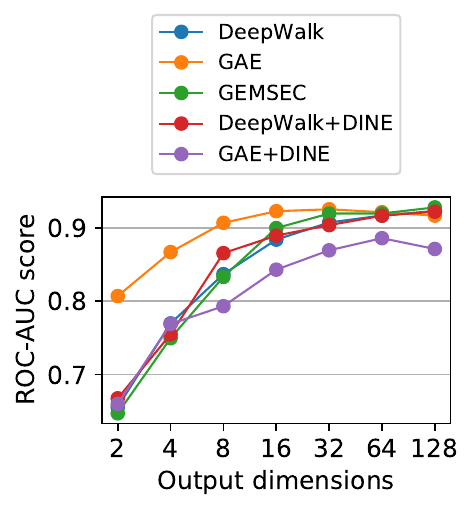}
     \end{subfigure}
     \begin{subfigure}{0.475\linewidth}
         \centering
         \includegraphics[width=\linewidth]{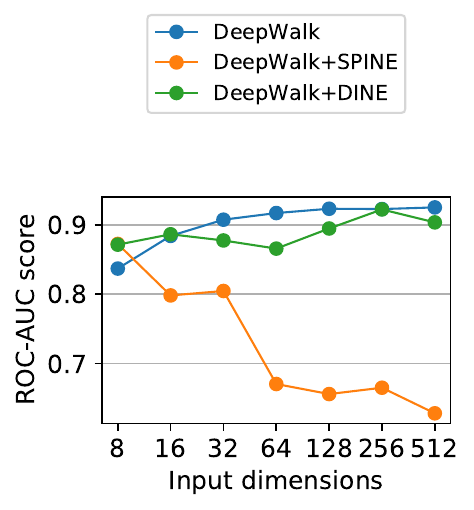}
     \end{subfigure}
     \captionof{figure}{ROC-AUC scores for link prediction in \textsc{Cora}. 
     On the left, we compare the ROC-AUC scores of different dense embedding methods when varying the number of output dimensions and choosing the best score among models with a different number of input dimensions.
    On the right, we compare the scores of \textsc{DeepWalk} and \textsc{Spine} when varying the number of input dimensions and choosing the best score among models with a different number of output dimensions.}
     \label{fig:cora_linkpred}
    \end{minipage}
\end{table*}
\subsection{Tasks Description}

For answering \textbf{RQ1}, we measure the interpretability of \textsc{Dine} in comparison to our baseline methods. To do so, we compute for any embedding dimension interpretability scores that we have defined in \S \ref{sec:interpret_scores}. 
Instead of averaging over all dimensions for comparing the models, we focus on a subset of effective dimensions $\mathcal{D}_{eff}$ that encode the majority of edge information,  to avoid potential noise from the less important dimensions.   
Specifically, after computing $I_d^{(com)}$ or $I_d^{(sp)}$, we select the top-ranked dimensions that cumulatively contribute to the reconstruction of at least 90\% of the graph edges, i.e., $|\bigcup_{d \in \mathcal{D}_{eff}} \mathcal{E}_d| = 90\% |\mathcal{E}|$. Thus we compute global scores as $I^{(com|sp)}_{eff} = \frac{1}{|\mathcal{D}_{eff}|}\sum_{d \in \mathcal{D}_{eff}}I^{(com|sp)}_d$.
For answering \textbf{RQ2}, we measure the link prediction performance of \textsc{Dine} in comparison to our baseline methods.
To do so, before training every method, we randomly remove $10\%$ of the edges that are used as positive examples for the link prediction task on one hand. 
On the other hand, we also sample the same number of node pairs from the set of non-existing links as negative examples. 
The task consists in ranking the collected node pairs with the scoring function $\Delta$ and evaluating the classification performance with the ROC-AUC score.
\subsection{Results}
\begin{figure*}[h]
        \centering
        \includegraphics[width=0.8\linewidth]{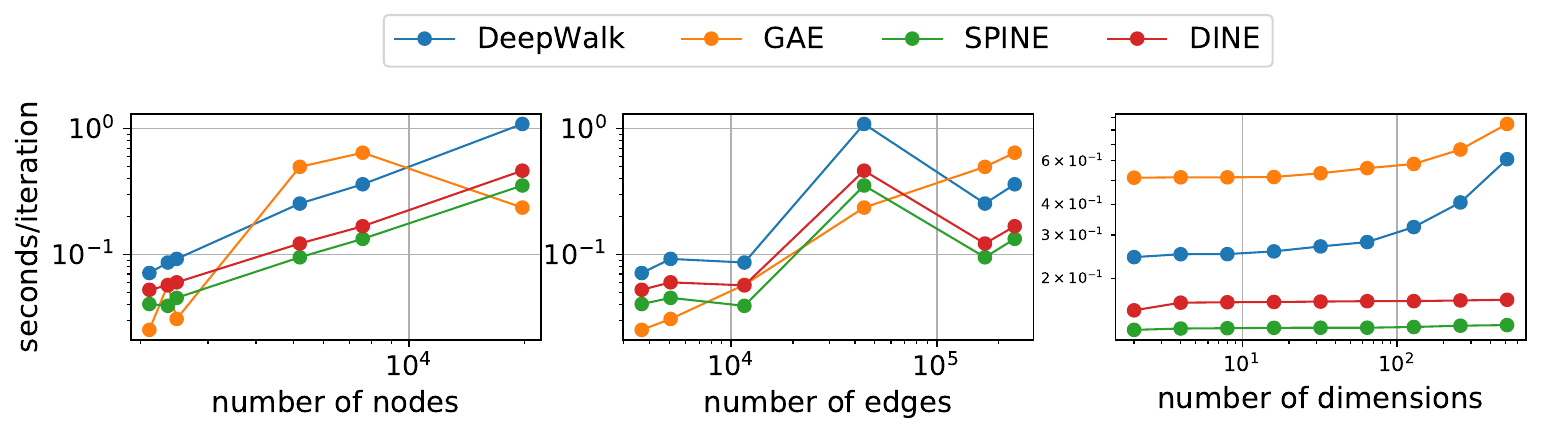}
        \caption{Normalized execution times for  different embedding methods when trained on multiple datasets. In the left, we compare running times while varying the number of nodes with 128-dim output embeddings; in the center, we compare running times while varying the number of edges with 128-dim output embeddings; in the right, we compare running times while varying the number of embedding dimensions in \flickr~dataset. }
     \label{fig:scalability}
\end{figure*}

In our experiments, all methods, with the exception of \textsc{Spine}, are trained to produce embedding vectors with dimensions in the set $\{2, 4, 8, 16, 32, 64, 128\}$, referred to as \textit{output dimensions}. 
On the other hand, the \textit{input dimensions} for \textsc{DeepWalk} and GAE vectors used for training \textsc{Dine} and \textsc{Spine} (only \textsc{DeepWalk} for the latter), taken from the set $\{8, 16, 32, 64, 128, 256, 512\}$, define the \textit{dimensionality} of the input vectors.
For \textsc{Spine}, due to the presence of the overcomplete layer, we chose output dimensions to be multiples of the input dimensions, i.e. between $\times 1$ and $\times 8$.

In our comparison, we evaluated \textsc{Dine} against both dense and sparse methods. For the comparison with dense methods, \textsc{Dine} was trained using \textsc{DeepWalk} and GAE vectors, with performance reported across different output dimensions. To compare with sparse methods, both \textsc{Spine} and \textsc{Dine} were trained using \textsc{DeepWalk} vectors, with their performance reported across different input \textsc{DeepWalk} dimensions. The results for dense and sparse embeddings are reported respectively at the top and at the bottom of each table, with average evaluation score and standard deviation computed over 5 separate training runs. Additional figures are reported in the Appendix with all the results not shown in the main paper.

\noindent\textbf{Interpretability (RQ1).}  
Best scores for our interpretability metric $I^{(com)}_{eff}$ and $I^{(sp)}_{eff}$  are reported in Tables \ref{tab:int_community} and \ref{tab:int_sparsity} respectively for all the datasets, with detailed plots in Figures \ref{fig:cora_community} and \ref{fig:cora_sparsity} for \textsc{Cora}. 
We notice that the combination \textsc{DeepWalk}+\textsc{Dine} performs well with respect to dense embeddings in almost every dataset where their vector dimensions are sparse and are grounded in the community structure. 
The model \textsc{GAE}+\textsc{Dine} is less interpretable than \textsc{DeepWalk}+\textsc{Dine}, but still more interpretable than the other dense baselines. 
Both \textsc{Dine} and \textsc{Spine} models trained on \textsc{DeepWalk} vectors demonstrate good interpretability, obtaining the best scores in half of the datasets each. Our results confirm the well-known property of vector sparsity improving the interpretability of representations \cite{du2019techniques}. Additionally, we observe that both interpretability metrics improve as the embedding dimensions increase in the \textsc{Cora} dataset (Figures \ref{fig:cora_community} and \ref{fig:cora_sparsity}). This is true also in the other datasets shown in the Appendix, providing important guidance for choosing the appropriate embedding size in real-world applications.

\noindent \textbf{Link prediction (RQ2).} ROC-AUC scores are documented in Table \ref{tab:linkpred} for all datasets, with a detailed illustration for \textsc{Cora} in Figure \ref{fig:cora_linkpred}. The results show that in citation networks, all models, including \textsc{Dine} retrofitted embeddings, perform similarly to the optimal results obtained from dense embeddings. 
 The implication of this result is that we do not have to trade task performance with increased interpretability. 
 In other datasets, the best scores are obtained by GAE and GAE+\textsc{Dine}. 
In fact, when comparing sparse embeddings, \textsc{DeepWalk}+\textsc{Dine} \textbf{consistently outperforms} \textsc{DeepWalk}+\textsc{Spine}, with comparable or even superior results (in the case of \flickr~ and \wiki) to \textsc{DeepWalk}. Interestingly, our results in Figure \ref{fig:cora_linkpred} also demonstrate that \textsc{Spine}'s performance decreases with increasing input dimensions, unlike the other methods.
 
\textbf{Scalability (RQ3).} 
The training times for various methods are presented in Figure~\ref{fig:scalability}, with the intervals normalized relative to the number of iterations/epochs. 
For \textsc{DeepWalk}, the intervals are further divided with respect to the \texttt{num\_walks} parameter to remove the dependence on the number of walks per node. 
Experimental results on the scalability suggest that it is possible to increase the interpretability of node representations without requiring significant computational costs.
The left panel shows that the runtimes for \textsc{DeepWalk}, \textsc{Spine}, and \textsc{Dine} increase with the number of nodes, while the center panel demonstrates that the execution time for GAE increases with the number of edges. Additionally, \textsc{Dine} has slightly longer training times compared to \textsc{Spine}, but both are faster than \textsc{DeepWalk}. The right panel indicates that the training time for \textsc{Spine} and \textsc{Dine} has a weak dependence on the number of embedding dimensions, while this dependence is more pronounced in GAE and \textsc{DeepWalk}. 

\section{ Conclusion and future work}
\label{sec:conclusions}

In this work, we presented a framework for constructing global explanations for node embeddings. We explain each embedding dimension using the important substructures of the input graph. To construct these explanations we developed a new \textit{model-agnostic} utility measure 
which computes the contributions of each dimension to predict the graph structure. Our explanations follow the desired properties of decomposability, comprehensibility and sparsity. 

With the goal of maximizing these properties, we proposed and developed \textsc{Dine}, an auto-encoder framework to enhance the interpretability of existing node embeddings. In short, \textsc{Dine} captures the structural properties encoded in an input embedding and optimizes a set of graph masks in order to promote orthogonality and sparsity of predicted sub-structures.
Our comprehensive experimental study supports our claims that \textsc{Dine} improves embedding comprehensibility over  standard node embedding techniques, without compromising the task performance. \textsc{Dine} is also preferable to the sparse method \textsc{Spine} due to its better achievements in link prediction. \textsc{Dine} scales well with respect to the input graph size, being suitable to be used in graphs with high edge density. 
Since the computation of the exact utility measure has exponential complexity as is usually the case for Shapley-based measures, the presented utility measure shares limitations common to other approximation strategies suggested in the literature. In particular, the approximation deteriorates under high interdependence among embedding features \cite{aas2021explaining}. Nevertheless, the encouraging results from our experiments support the effectiveness of this approach.

These contributions open multiple avenues for future work. Specifically, our approach can be extended to constructing interpretable node embeddings whose dimensions are aware of multi-scale subgraph structures \cite{perozzi2017don} inherent in many real-world graphs \cite{clauset2008hierarchical}. \textsc{Dine} can also be used as a plug-in architecture to facilitate interpretable learning in various graph neural network encoders \cite{kipf2016variational}.

\section*{Acknowledgments}
S. Piaggesi acknowledges partial support from the European Community program under the funding schemes: G.A. n.871042, “SoBigData++: European Integrated Infrastructure for Social Mining and Big Data Analytics”; G.A. 834756 “XAI: Science and technology for the eXplanation of AI decision making”. The funder had no role in study design, data collection and analysis, decision to publish, or preparation of the manuscript.

\bibliographystyle{ieeetr}
\bibliography{biblio}

\newpage
\begin{IEEEbiography}[{\includegraphics[width=1in,height=1.25in,trim={11in 3in 8in 5in},clip,keepaspectratio]{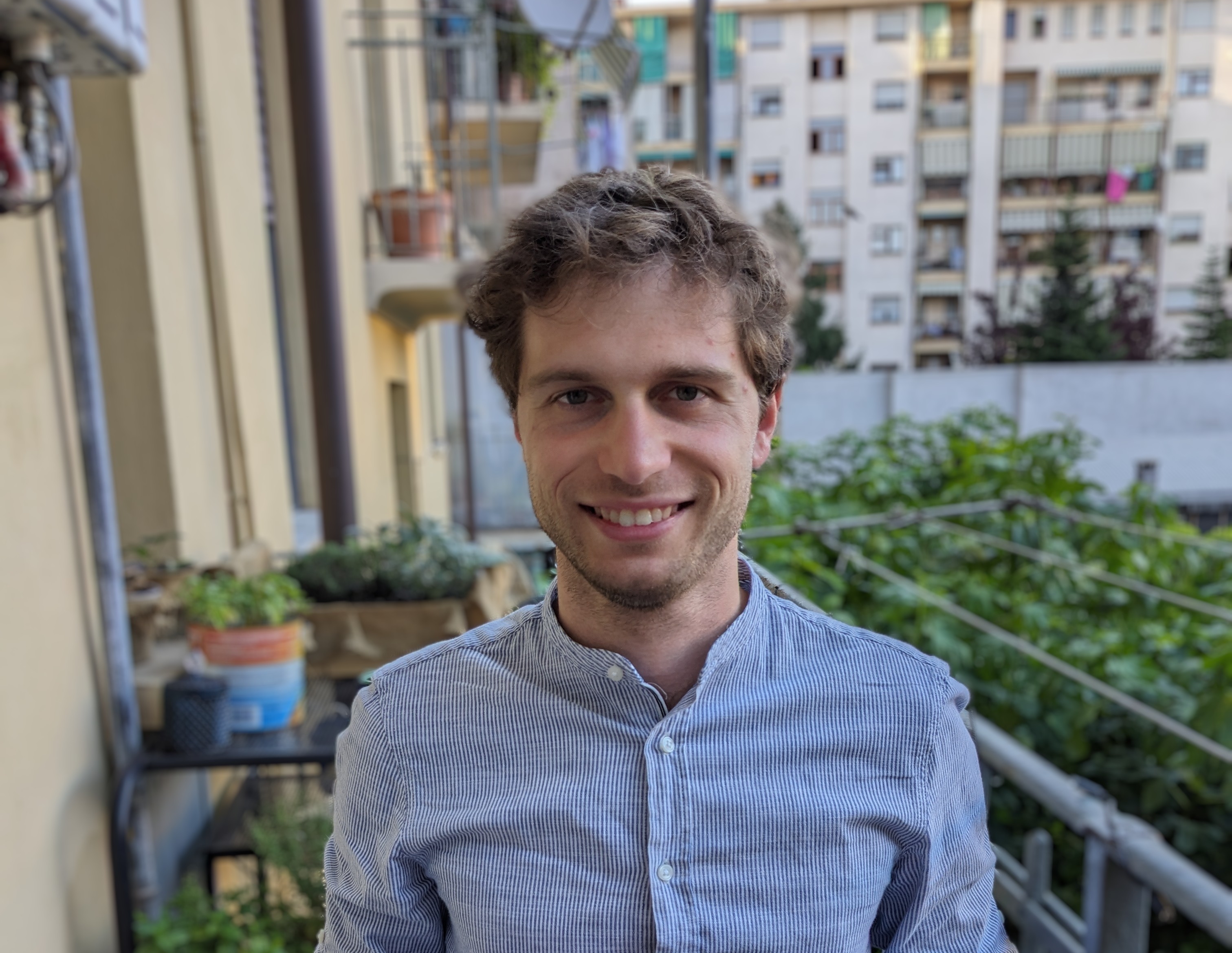}}]{Simone Piaggesi} received his Ph.D. degree in Data Science and Computation from the University of Bologna, Italy, in 2023. He is currently a Post-Doctoral Fellow in the Computer Science department of the University of Pisa, Italy. His research interests include interpretability of graph learning models. 
\end{IEEEbiography}

\begin{IEEEbiography}[{\includegraphics[width=1in,height=1.25in,clip,keepaspectratio]{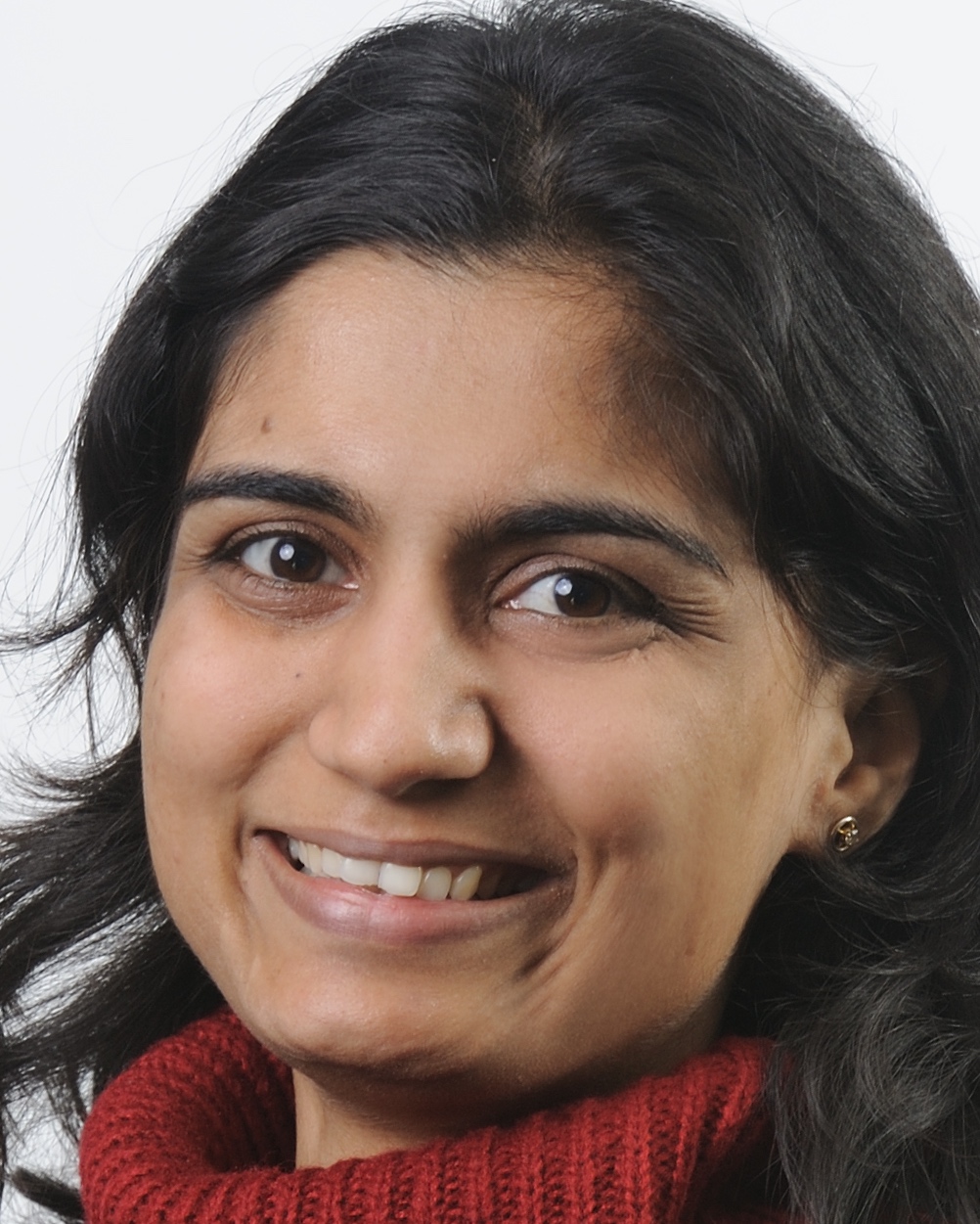}}] {Megha Khosla} is  an Assistant Professor in the Intelligent Systems department at TU Delft, Netherlands. Her main research area is Machine Learning on Graphs with focus on three key aspects of effectiveness, interpretability and privacy-preserving learning.
\end{IEEEbiography}

\begin{IEEEbiography}[{\includegraphics[width=1in,height=1.25in,trim={1.in 0.2in 0.5in 0.2in},clip,keepaspectratio]{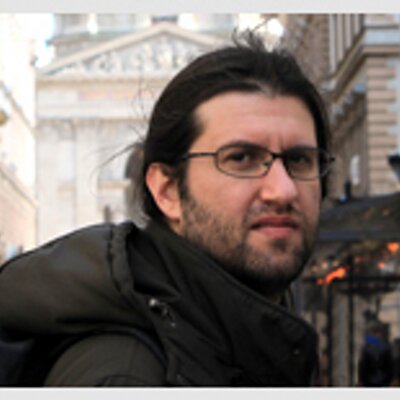}}]{Andr\'{e} Panisson} is a Principal Researcher in the CENTAI Institute in Turin,
Italy. His current research focuses on the development of tools to facilitate the explainability, fairness, and transparency of Artificial Intelligence systems. His past and current research also focuses on the intersection of Machine Learning, Network Science, and Data
Science, primarily on developing methods for the analysis, modeling, and simulation of complex phenomena in systems that involve technological and social factors.
\end{IEEEbiography}

\begin{IEEEbiography}[{\includegraphics[width=1in,height=1.25in,clip,keepaspectratio]{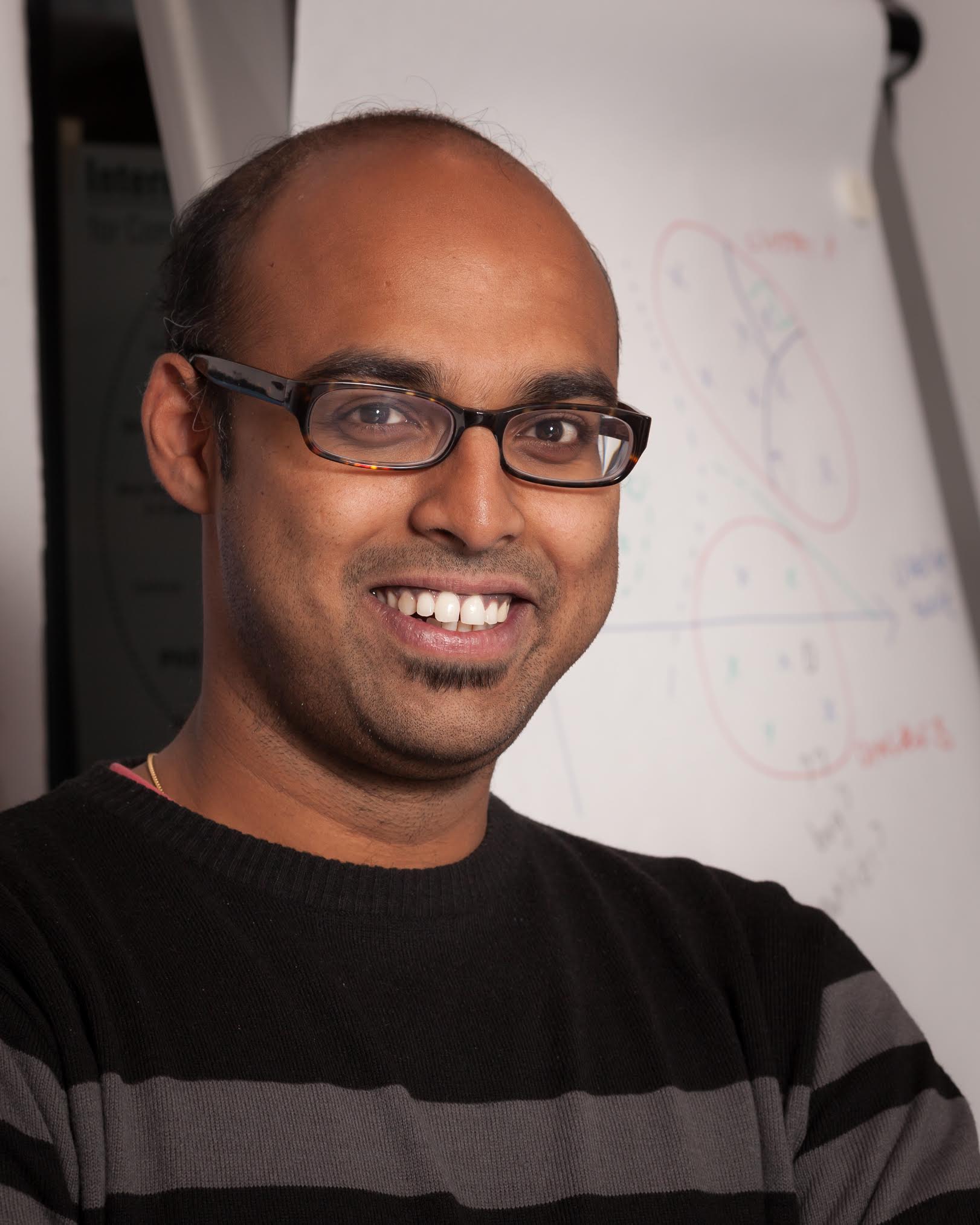}}]{Avishek Anand} is an associate professor in the Web Information Systems (WIS) at the Software Technology (ST) department at Delft University of Technology (TU Delft). He is also a member of the L3S Research Center, Hannover, Germany. One of his main research focus is interpretability of machine learning models with focus on representations from discrete input like text and graphs.
\end{IEEEbiography}

\onecolumn

\appendix

\subsection*{Extensive Results}

In Figures \ref{fig:com_int}, \ref{fig:ent_int} and \ref{fig:link} we report the whole results on multiple datasets for our experiments, that were only shown for \textsc{Cora} in the main paper (Figures 4-6). 

\setcounter{table}{0}
\renewcommand{\thetable}{A\arabic{table}}

\setcounter{figure}{0}
\renewcommand{\thefigure}{A\arabic{figure}}

\begin{figure*}[h!]
     \centering
         \centering
         \includegraphics[width=\textwidth]{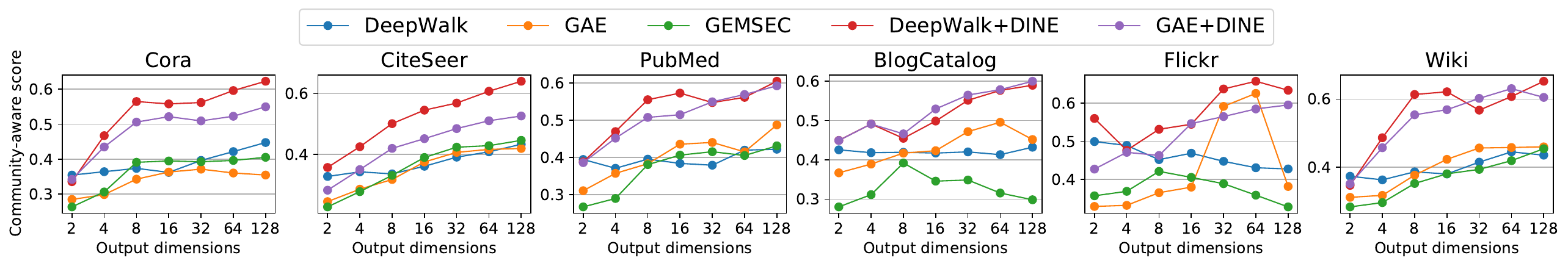}
         \centering
         \includegraphics[width=\textwidth]{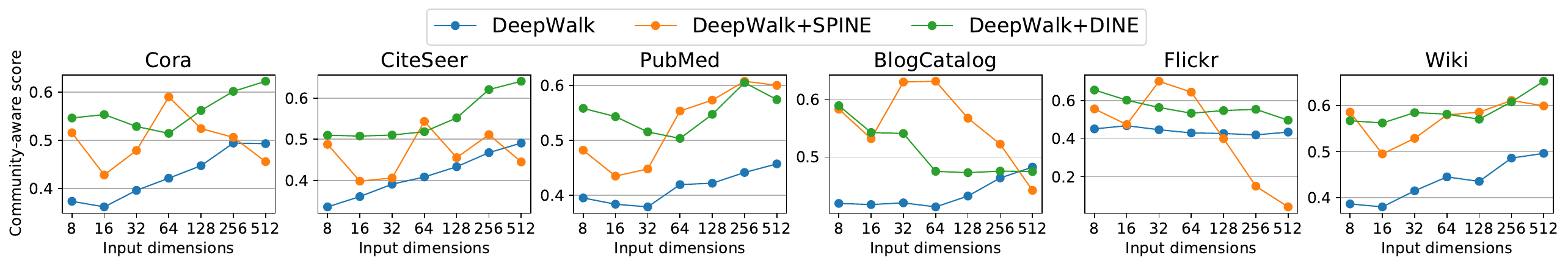}
        \caption{Community-aware scores for the interpretations of embeddings trained with different methods. 
        In the top panel, we compare the community-aware scores of different dense embedding methods when varying the number of output dimensions and choosing the best score among models with different number of input dimensions;
        in the bottom panel, we compare the scores of \textsc{DeepWalk} and \textsc{Spine} when varying the number of input dimensions and choosing the best score among models with a different number of output dimensions.}
        \label{fig:com_int}
\end{figure*}

\begin{figure*}
     \centering
         \centering
         \includegraphics[width=\textwidth]{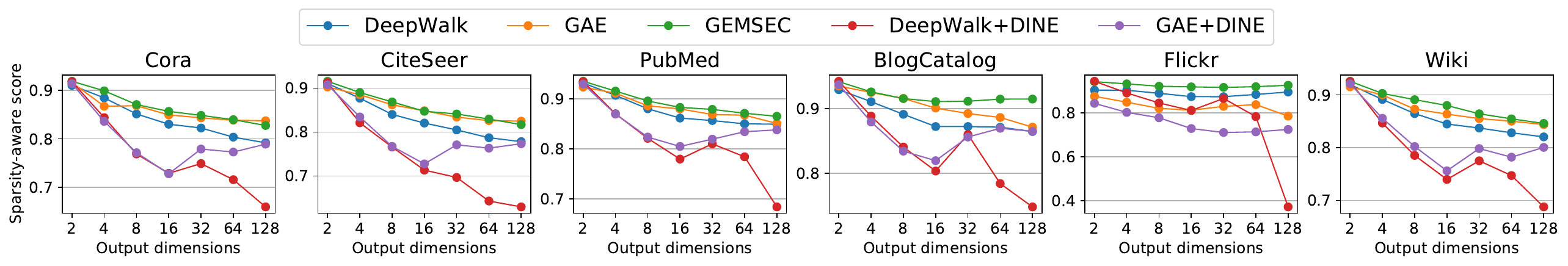}
         \centering
         \includegraphics[width=\textwidth]{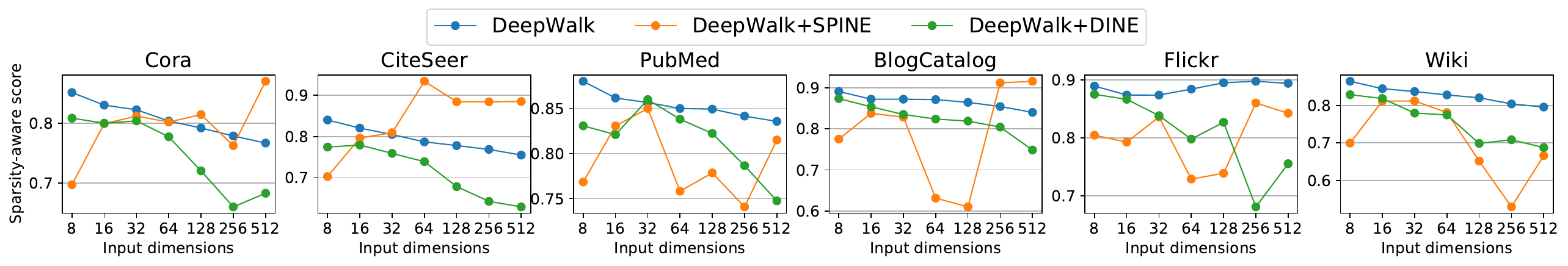}
     
    \caption{Sparsity-aware scores for the interpretations of embeddings trained with different methods. 
        In the top panel, we compare the sparsity-aware scores of different dense embedding methods when varying the number of output dimensions and choosing the best score among models with different number of input dimensions;
        in the bottom panel, we compare the scores of \textsc{DeepWalk} and \textsc{Spine} when varying the number of input dimensions and choosing the best score among models with a different number of output dimensions.}
        \label{fig:ent_int}
\end{figure*}

\begin{figure*}
     \centering
         \centering
         \includegraphics[width=\textwidth]{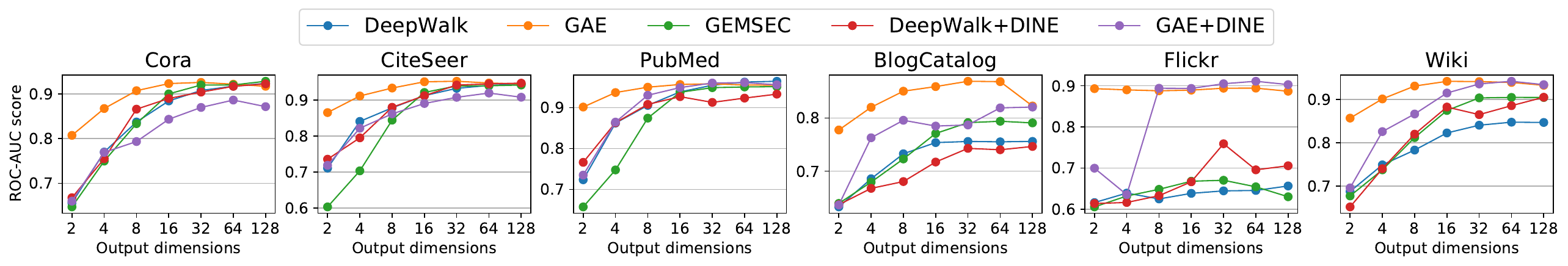}
         \centering
         \includegraphics[width=\textwidth]{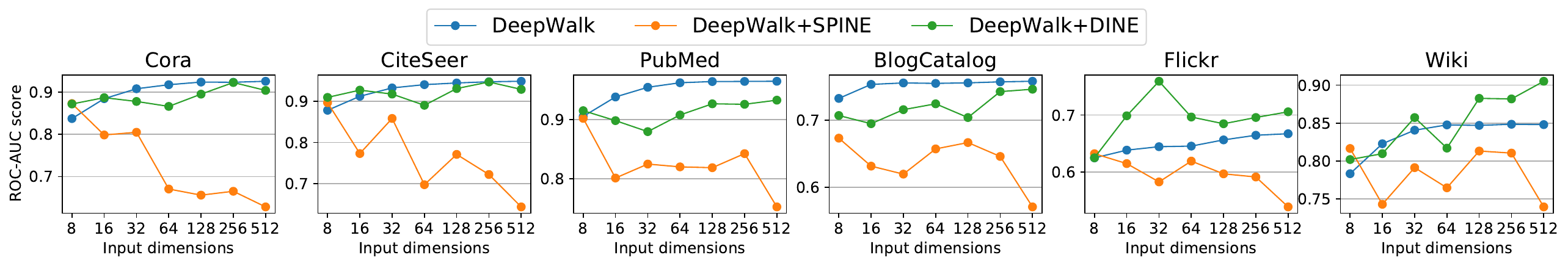}
     
    \caption{ROC-AUC scores in link prediction tasks of embeddings trained with different methods. 
     In the top panel, we compare the ROC-AUC scores of different dense embedding methods when varying the number of output dimensions and choosing the best score among models with a different number of input dimensions;
     in the bottom panel, we compare the scores of \textsc{DeepWalk} and \textsc{Spine} when varying the number of input dimensions and choosing the best score among models with a different number of output dimensions.}
        \label{fig:link}
\end{figure*}

\newpage

\subsection*{Ablation Studies}

In Figure \ref{fig:abl} we show the interpretability metrics on multiple datasets when removing regularizers from the DINE objective.

\begin{figure*}[h!]
     \centering
         \centering
         \includegraphics[width=\textwidth]{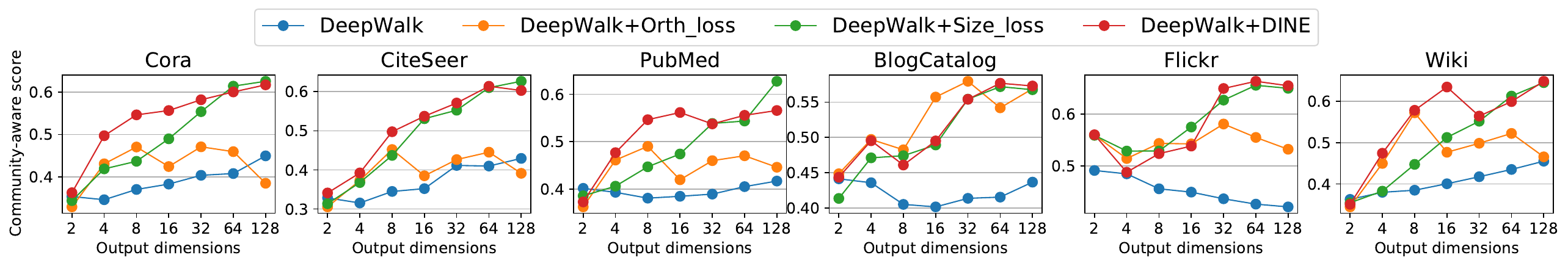}
         \centering
         \includegraphics[width=\textwidth]{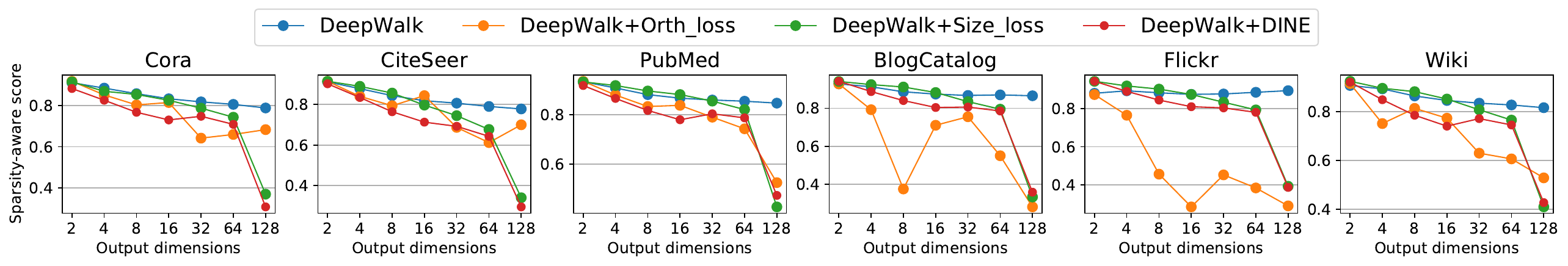}
        \caption{Interpretability scores for the DINE embeddings trained on \textsc{DeepWalk} vectors when removing regularizers (\textit{Size Loss}, \textit{Orthogonality Loss} or both), varying the number of output dimensions and choosing the best score among models with different number of input dimensions. In the top panel, we compare the community-aware scores; in the bottom panel, we compare sparsity-aware scores.}
        \label{fig:abl}
\end{figure*}

\newpage

\subsection*{Noise Robustness Studies}

In Figure \ref{fig:noise} we show the interpretability metrics on multiple datasets when adding random noise to the entries of \textsc{DeepWalk} embedding matrices. For each entry $X_{d,v} = \mathrm{v}_d$ we sample a random Gaussian variable $\epsilon_{d,v} \propto \mathcal{N}(0, \delta)$, where the variance $\delta$ represents the amount of noise that we want to add. Then, we perturb the original embedding entries with the following rule $\Tilde{X}_{d,v} = X_{d,v} \cdot \exp{\epsilon_{d,v}}$.

\begin{figure*}[h!]
     \centering
         \centering
         \includegraphics[width=\textwidth]{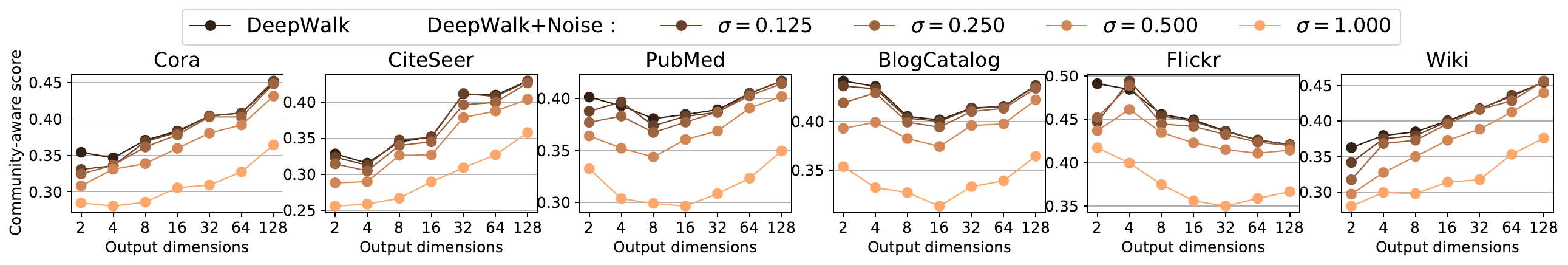}
         \centering
         \includegraphics[width=\textwidth]{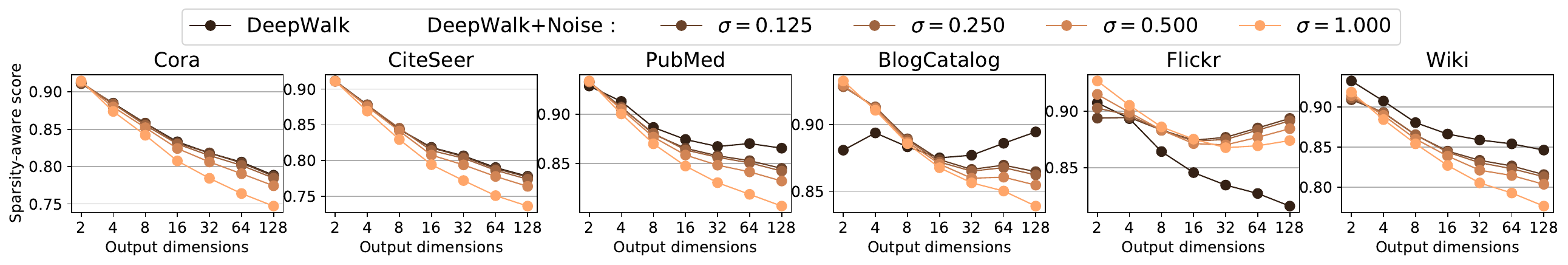}
        \caption{Interpretability scores for the \textsc{DeepWalk} embeddings perturbed with variable entry-wise noise, varying the number of output dimensions and choosing the best score among models with different number of input dimensions. In the top panel, we compare the community-aware scores; in the bottom panel, we compare sparsity-aware scores.}
        \label{fig:noise}
\end{figure*}
    
\vfill

\end{document}